\definecolor{niceblue}{rgb}{0.10, 0.14, 0.76} 
\definecolor{nicered}{rgb}{0.70, 0.0, 0.0}
\newcommand{\score}{\text{LES}}
\newcommand{\Score}{\text{Latent Exploration Score}}
\newcommand{\noreg}{\text{LSO (GA)}}
\newcommand{\likelihood}{\text{Likelihood}}
\newcommand{\bound}{\text{LSO (L-BFGS)}}
\newcommand{\turbo}{\text{TuRBO}}
\newcommand{\uc}{\text{UC}}
\newcommand{\prior}{\text{Prior}}
\newcommand{\zale}{\text{Zaleplon MPO}}
\newcommand{\pdop}{\text{Perindopril MPO}}
\newcommand{\rano}{\text{Ranolazine MPO}}
\newcommand{\rdfilter}{\texttt{rd\_filters}}
\newcommand{\change}[1]{\textcolor{black}{#1}}
\newcommand{\oldchange}[1]{\textcolor{black}{#1}}
\newcommand{\bz}{\boldsymbol{z}}
\newcommand{\bx}{\boldsymbol{x}}
\newcommand{\bA}{\boldsymbol{A}}
\newcommand{\bb}{\boldsymbol{b}}
\newcommand{\bp}{\boldsymbol{p}}
\renewcommand{\P}{\mathbb{P}}
\newcommand{\E}{\mathbb{E}}
\newcommand{\dec}{\mathbf{G}_\theta}
\newcommand{\declog}{\mathbf{L}_\theta}
\newcommand{\enc}{\mathbf{E}_\theta}
\newcommand{\nn}{f_\theta}
\newcommand{\seqlen}{\textup{L}}
\newcommand{\sur}{\hat{f}}
\newcommand{\acq}{\mathcal{A}}
\newcommand{\obj}{\mathcal{M}}
\newcommand{\vocsize}{\textup{D}}
\newcommand{\validset}{\mathcal{V}}
\newcommand{\blackbox}{\mathcal{M}}
\newcommand{\scoref}{\mc{S}}
\newcommand{\scorefrho}{\mc{S}_\rho}
\newcommand{\logp}{\text{logP} }
\newcommand{\R}{\mathbb{R}}
\newcommand{\mc}[1]{\mathcal{#1}}
\newcommand{\bs}[1]{\boldsymbol{#1}}
\theoremstyle{plain}
\newtheorem{theorem}{Theorem}[section]
\newtheorem{lemma}[theorem]{Lemma}
\theoremstyle{definition}
\newtheorem{example}[theorem]{Example}
\theoremstyle{remark}
\newtheorem{remark}[theorem]{Remark}
\title{Mitigating over-exploration in latent space optimization using $\score$}
\author{%
  Omer Ronen\textsuperscript{1} \quad
  Ahmed Imtiaz Humayun\textsuperscript{2} \quad
  Richard Baraniuk\textsuperscript{2} \quad
  Randall Balestriero\textsuperscript{3} \quad
  Bin Yu\textsuperscript{1} \\
  \texttt{omer\_ronen@berkeley.edu} \quad
  \texttt{imtiaz@rice.edu} \quad
  \texttt{richb@rice.edu} \\
  \texttt{rbalestr@brown.edu} \quad
  \texttt{binyu@berkeley.edu} \\
  \vspace{0pt} \\
  \textsuperscript{1}Department of Statistics, UC Berkeley \\
  \textsuperscript{2}Department of Electrical and Computer Engineering, Rice University \\
  \textsuperscript{3}Department of Computer Science, Brown University
}
\begin{document}

\maketitle

\begin{abstract}
We develop $\Score$ ($\score$) to mitigate over-exploration in Latent Space Optimization (LSO), a popular method for solving black-box discrete optimization problems. 
LSO utilizes continuous optimization within the latent space of a Variational Autoencoder (VAE) and is known to be susceptible to over-exploration, which manifests in unrealistic solutions that reduce its practicality. 
$\score$  leverages the trained decoder's approximation of the data distribution, and can be employed with any VAE decoder--including pretrained ones--without additional training, architectural changes or access to the training data. Our evaluation across five LSO benchmark tasks and twenty-two VAE models demonstrates that $\score$ always enhances the quality of the solutions while maintaining high objective values, leading to improvements over existing solutions in most cases.
We believe that new avenues to LSO will be opened by $\score$' ability to identify out of distribution areas, differentiability, and computational tractability. Open source code for $\score$ is available at \href{https://github.com/OmerRonen/les}{\texttt{https://github.com/OmerRonen/les}}.
\end{abstract}

\section{Introduction}\label{sec:intro}
\begin{figure}[ht]\label{fig:teaser}
    \centering
\includegraphics[width=1\linewidth]{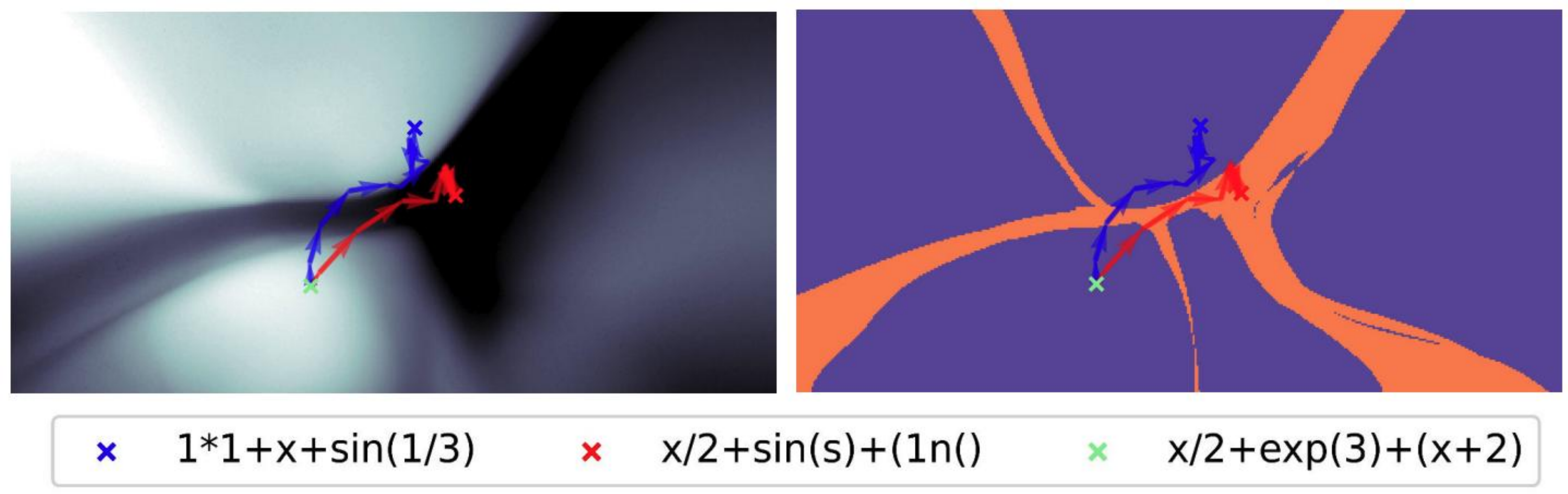}
    \caption{Incorporating $\score$ promotes valid solutions. We consider the task of approximating the expression \texttt{1/3 + x + sin(x * x)}, using LSO. Optimization trajectories with (blue) and without (red) $\score$ constraint in the latent space of a VAE are projected onto a two-dimensional subspace that contains the starting point and the end-points obtained after 10 gradient ascent steps. In the left panel, we show the $\score$ score for latent vectors on the two-dimensional subspace, with darker shades corresponding to lower $\score$. In the right panel, we show the validity of the decoder outputs for each latent vector, with orange denoting invalid generations. High $\score$ values correlate with valid areas, and incorporating $\score$ in LSO produce an expression that adheres to the grammatical rules of \cref{ex:expressions}.}
\end{figure}

Many important tasks in scientific fields, such as small molecule discovery and protein engineering, can be framed as discrete black-box optimization problems. In contrast to conditional sampling-based approaches, including GFlowNet \cite{bengio2023gflownet} and Diffusion \cite{corso2022diffdock, igashov2024equivariant}, which are better suited for applications like linker design \cite{du2024machine}, optimization is particularly effective when the goal is to improve a specific property, such as enhancing a drug's safety.

LSO was recently developed to enhance the sample efficiency of discrete optimization algorithms, such as genetic algorithms, in the black-box setting \cite{gomez2018automatic}. LSO transfers the optimization problem to the domain of the latent space of a VAE,
 which can be efficiently explored using continuous optimization methods. However, ensuring that LSO solutions respect the structure of the original space remains a challenge. To illustrate this issue, we  provide some examples.

\begin{example}[Arithmetic expressions]\label{ex:expressions}
    An expression built up using numbers, arithmetic operators and parentheses is called an arithmetic expression. However, not every sequence of the above elements correspond to a valid expression. For instance, the expression "\texttt{sin(x) + x}" is a valid expression while "\texttt{sin(xxx}" is not.
\end{example}   

\begin{example}[Simplified molecular-input line-entry system (SMILES)]\label{ex:smiles}
    SMILES provides a syntax to describe molecules using short ASCII strings. Atoms are represented by letters (e.g., water:"O"), bonds are represented by symbols (e.g., triple: "\#", double: "="), branches are represented in parentheses and cyclic structures are represented by inserting numbers at the beginning and the end.  
    Like the arithmetic expressions case, not every combination of the elements described above corresponds to a valid molecule. For example, while "\texttt{C1CCCCC1}" is valid, both  "\texttt{C1CCCCC2}" and "\texttt{C1CCCCC)}" are not.

  \end{example}  
  
\begin{example}[Quality filters for molecules]\label{ex:filters}
Chemists seek molecules that not only optimize desired chemical properties but are also stable and easy to synthesize. This has led to the development of rules such as Lipinski's Rule of Five (RO5 \cite{lipinski1997experimental}), which helps determine if the bioavailability (i.e., the proportion of a drug or other substance that enters the circulation when introduced into the body) of a given compound meets a certain threshold. For example, RO5 suggests that poor absorption is more likely when the octanol-water partition coefficient ($\logp$) exceeds 5. Similarly, the Pan Assay Interference Compounds \cite{baell2010new} filter helps in identifying false positives in assay screenings. The $\rdfilter$ \cite{Walters2019} package has curated many such rules and is considered a "high precision, low recall surrogate measure" \cite{brown2019guacamol}. Following \cite{NEURIPS2021_06fe1c23} we consider a sample valid if it passes the $\rdfilter$ filters \footnote{We use the default Inpharmatica rule set comprised of 91 alerts}

\end{example}  

Numerous directions have been explored to overcome the challenge of providing valid solutions, including specialized VAE architectures \cite{kusner2017grammar,jin2018junction} or robust representations for discrete data \cite{krenn2020self}. Additionally, constrained objectives can be formulated under the assumption that one has access to a function which quantifies the validity of any point in the latent space \cite{griffiths2020constrained}. However, in many realistic scenarios, such as \cref{ex:filters}, these solutions may not be directly applicable, as the structure of the sequence space may not be sufficiently well understood. To address this, \cite{NEURIPS2021_06fe1c23} proposed using an estimator of the uncertainty of the decoder,  based on the variational approximation to a posterior distribution over the VAE parameters, encouraging LSO to respect the sequence space structure. Although this approach proved effective, the non-differentiable nature of the uncertainty score required its integration into LSO through heuristic approaches. Additionally, the computation of the uncertainty score is not exact (i.e., it relies on variational approximation and Monte Carlo sampling) and requires significant amount of time to compute. Therefore, there is a need for robust methods that work across different VAEs and sequence space structures, and can be easily integrated into existing LSO pipelines.

To achieve this goal, we develop $\score$, a score that can be used as a constraint in LSO optimization to increase the number of solutions that respect a given structure. The distinctive characteristics of $\score$ are differentiability and robustness that allow its easy integration into existing LSO pipelines. Specifically, our contributions are as follows:
\begin{itemize}
    \item We introduce $\score$, a score that achieves higher values in regions of the latent space closer to the training data. Our results demonstrate that $\score$ is highly effective at identifying regions that preserve the structure of the sequence space. Although $\score$' computation scales cubically with the latent dimension, it is up to 80\% faster than the current state-of-the-art for identifying out-of-distribution data points in the latent space of generative models for discrete sequences (\cref{tab:roc_scores,tab:wc}).
    \item We develop a numerically stable optimization procedure to incorporate $\score$ as a constraint in LSO.
    \item We evaluate $\score$-constrained LSO across thirty optimization tasks, including twenty-two VAEs and five benchmark problems, demonstrating its robustness in generating valid solutions and achieving high objective values. Specifically, in 19 out of the 30 LSO experiments, our method either finds the best solution on average or achieves a solution within 1 standard deviation of the best solution across 10 independent runs. This outperforms the six alternative methods we considered by 19\% (\cref{tab:top1,tab:top20}).
\end{itemize}

\section{Background: Latent Space Optimization}\label{sec:lso}
LSO is a method for solving black box optimization problems in discrete and structured spaces, \oldchange{such as the space of valid arithmetic expressions}. Formally, let $\validset\subset \R^{\seqlen\times\vocsize}$ be a discrete and structured space, represented as a sequence of $\seqlen$ one-hot vectors of dimension $\vocsize$. We represent sequences of length $\seqlen$ of categorical variables with $\vocsize$ categories. $\seqlen$ is set as the maximum sequence length that we are optimizing for, and one of the $\vocsize$ categories is used as an "empty" category. For instance, in the case of valid arithmetic expressions, $\validset$ would be the set of sequences that define such expressions. Let $\blackbox:\validset\rightarrow\R$ be the objective function. LSO solves,
\begin{align}\label{eq:lso}
\arg\max_{x\in\validset}\blackbox(x).
\end{align}

In this setting, we assume that evaluations of the objective function ($\blackbox$) are expensive to conduct. For example, the objective may be the binding affinity of a compound to a given protein, measured through a wet lab experiment.  

A popular approach to solve \cref{eq:lso} is Bayesian Optimization (BO), which utilizes first order optimization of a surrogate model for $\blackbox$. However, since the space is discrete, first order optimization cannot be directly applied.  \newline
In an attempt to make BO applicable for solving \cref{eq:lso}, \cite{gomez2018automatic} proposed to transfer the optimization problem into that over a domain of the latent space of a deep generative model and subsequently perform BO in this space. The main idea is to (1) learn a continuous representation of the discrete objects (e.g., using a VAE) and (2) perform BO in the latent space while decoding the solution at each step. Formally, given a pre-trained encoder ($\enc$) and decoder ($\dec$) the initial labelled dataset $\mc{D} = \{\bx_i, y_i\}_{i=1}^n$ is first encoded into the latent space $\mc{D}^{\bz} = \{\bz_i = \enc(\bx_i), y_i\}_{i=1}^n$. Using the encoded dataset, a BO procedure is conducted, which we describe in \cref{alg:lso}. Most commonly, a Gaussian process is used as the surrogate model for $\blackbox$, and the acquisition function is the expected improvement \citep{frazier2018bayesian}, defined as:

\begin{align}\label{eq:ei}
     \acq_{\sur}(\bz)= \E_{\sur} \max(\sur(\bz) - \max_{i} y_i, 0)
\end{align}

where the expectation is with respect to the distribution of the function $\sur$, conditioned on $\mc{D}^{\bz}$.
\begin{algorithm}[H]
    \caption{Latent Space Optimization}
    \label{alg:lso}
    \textbf{for} $t=1$ \textbf{to} $T$ \textbf{do}
    \begin{enumerate}
            \item Fit a surrogate model $\sur$ to the encoded dataset, $\mc{D}^{\bz}$
    \item Generate a new batch of query points by optimizing a chosen acquisition function ($\acq$)
    \begin{align}\label{eq:acq_opt}
        \bz^{(\text{new})} = \arg\max_{\bz} \acq_{\sur}(\bz)
    \end{align}
    \item Decode $\bx^{(\text{new})} = \dec(\bz^{(\text{new})})$, evaluate the corresponding true objective values ($y^{(\text{new})} = \obj(\bx^{(\text{new})})$) and update $\mc{D}^{\bz}$ with ($\bz^{(\text{new})}, y^{(\text{new})}$).
    \end{enumerate}
\end{algorithm}

\paragraph{Over-exploration in LSO} 
Multiple studies \citep{NEURIPS2021_06fe1c23,kusner2017grammar} have found that unconstrained latent space optimization (LSO) often yields solutions that disregard the aforementioned structures. For example, when searching for arithmetic expressions, invalid equations like "$ssin(xxx$" frequently occur. Similarly, many solutions in molecule searches fail to pass basic quality filters (\cref{ex:filters}), limiting their practical utility \citep{maus2022local}.  

While acquisition functions such as expected improvement (\cref{eq:ei}) are designed to balance exploration and exploitation based on the estimated uncertainty from the Gaussian process model for $\obj$. The frequent generation of invalid solutions during acquisition optimization, which implies that the estimated uncertainty can be problematic in this setting, underscores the need for additional regularization \citep{tripp2020sample}, which we aim to address.

To mitigate over-exploration, we propose adding a penalty to \cref{eq:acq_opt}. The penalty uses a new score, giving higher values over the latent space valid set, defined as:
    \begin{align}\label{def:val_reg}
        \{\bz;\dec(\bz)\in\validset\},
    \end{align}
where $\dec:\mathcal{Z}\rightarrow\R^{\seqlen\times\vocsize}$ is the decoder network, and  $\validset\subset\R^{\seqlen\times\vocsize}$ is the set of valid sequences.

The derivation of our score leverages the Continuous Piecewise Affine (CPA) representation of neural networks, which we briefly review below.

\paragraph{Deep generative networks as CPA} Following \citep{humayun2022polarity, humayun2021magnet, balestriero2018mad, balestriero2024geometry}, we consider the representation of Deep Generative Networks (DGNs) as Continues Piecewise Affine (CPA) Splines operators. Let $\nn$ be any neural network with affine layers and piecewise affine activations then it holds that
\begin{align}\label{eq:nn_cpas}
    \nn (\bz) = \sum_{\omega\in\Omega} \left(\bA_\omega \bz + \bb_\omega\right)1_{\{\bz\in\omega\}},
\end{align}
where $\Omega$ is the input space partition induced by $\nn$, $\omega$ is a particular region and the parameters $\bA_\omega$ and  $\bb_\omega$ defines the affine transformation depending on $\omega$. 

In cases where the neural network $\nn$ is not composed solely of piecewise affine layers and activations, we leverage the result from \cite{daubechies2022neural} to assert that \cref{eq:nn_cpas} either exactly represents $\nn$ or provides a sufficiently accurate approximation for our practical purposes \citep{humayun2022polarity}. We therefore argue that all the decoder neural networks included in our study (i.e., GRU, LSTM, and Transformers) can be approximated with high accuracy as continuous piecewise affine (CPA) functions.

\section{A Latent Exploration Score to Reduce Over-Exploration in LSO}\label{sec:score}

In this section, we introduce $\Score$ ($\score$), our new score to reduce over-exploration in LSO. We begin by motivating $\score$ and proceed to formally derive it in \cref{sec:score_der}. In \cref{sec:sc_val}, we provide empirical evidence that $\score$ gives higher values in the latent space valid set. The use of $\score$ to regularize LSO is left for \cref{sec:exprs}.

\paragraph{Motivation}

Our goal is to develop a meaningful constraint for optimizing the acquisition function within a latent space of a given VAE. Specifically, we aim to construct a constraint that is a continuous function of 
$\bz$ with higher values, indicating that it is more likely that $\bz$ resides within valid regions of the latent space (\cref{def:val_reg}).  

Such a score should be higher in regions near training data points, assuming most of VAE training data is valid. To achieve this, we treat the latent space of the VAE as a probability space, i.e. $\bz\sim p_{\bz}$, for some prior distribution $p$ (for example standard Gaussian). The prior should reflect our best guess for the distribution of the observed data in the latent space. Solutions are mapped back to sequences by the decoder through a deterministic (we do not consider $\bx$ to follow a conditional distribution given $\bz$) transformation of the latent vectors. Therefore, any distribution on the latent space defines a distribution over the space of sequences.  Our score uses the density function of the push-forward measure of $\bx = \dec(\bz)$, which we call the sequence density. Consequentially, our score depends only on the decoder network, not the encoder, and can potentially be applied to other generative models like GANs or diffusion models.
\paragraph{Why use the sequence density function?} We argue that for a well-trained decoder network, the density should be higher in areas of the sequence space close to the training data. To see why, consider a decoding model $\dec$ trained on a dataset $\{(\bz_{i}, \bx_i)\}_{i=1}^n$. The average loss (L) at $\bz$ is
\begin{align}
    \ell(\dec(\bz)) = \mathbb{E}_{\bx | 
\enc(\bx) = \bz} L(\dec(\bz), \bx).
\end{align}
 As the training process is designed to minimize the population loss:  $\mathbb{E}\ell(\dec(\bz))$, if successful, we hypothesize that the distribution of $\dec(\bz)$  puts higher weight in the areas where $\ell(\dec(\bz))$ is low. Since we expect most of the training data to be valid 
 and to achieve low expected loss, the sequence density should put higher weight on the latent space valid set. In, \cref{sec:sc_val} we provide an empirical validation for this hypothesis, for \cref{ex:expressions,ex:smiles,ex:filters}.  We highlight that this relationship between the valid set and the sequence density depends on how well the decoder fits the data.

 \subsection{Derivation of \score}\label{sec:score_der}

\begin{figure}[H]
    \centering
    \includegraphics[width=0.5\linewidth]{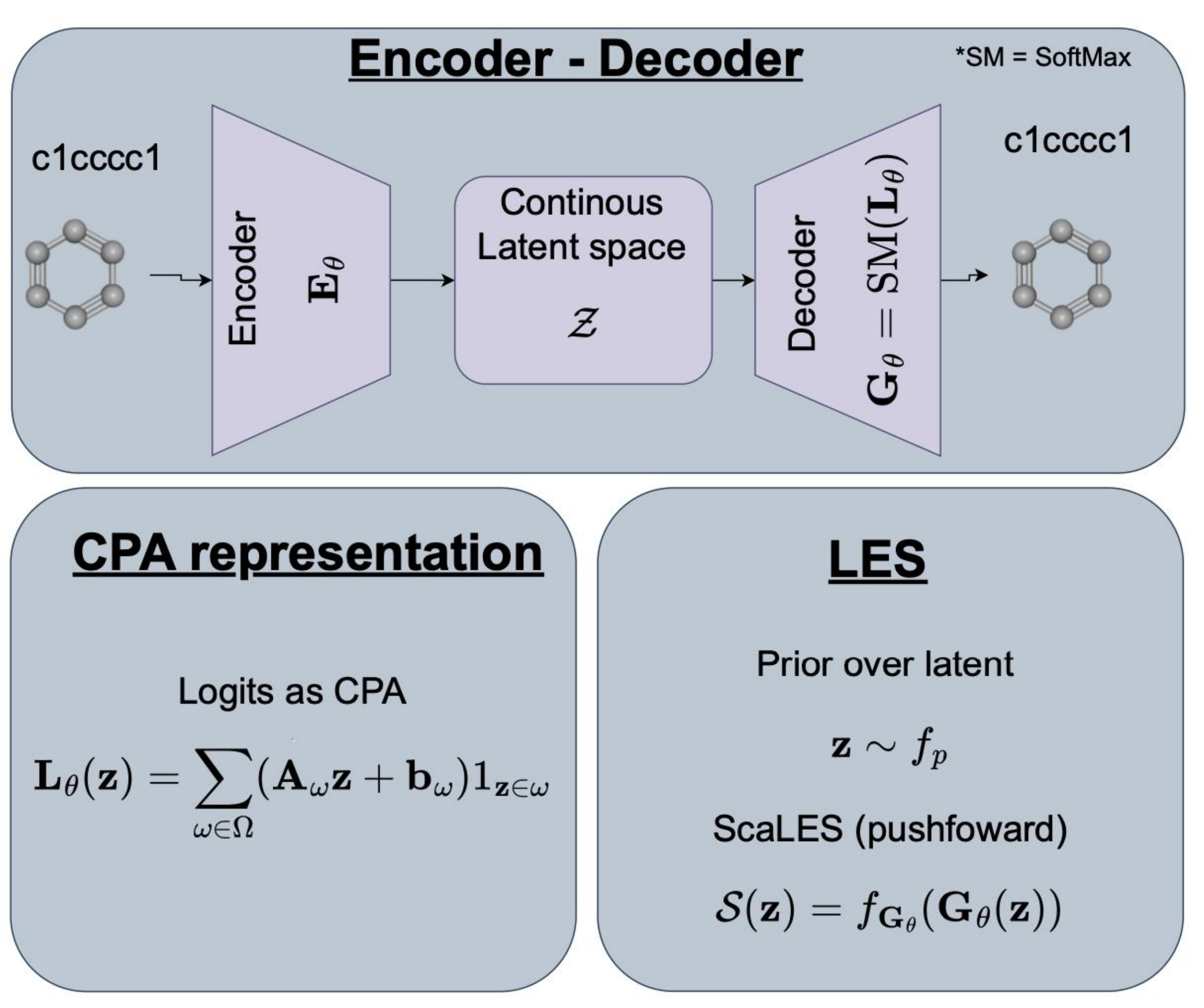}
    \caption{Derivation of $\score$. The decoder network ($\dec$), which maps from the latent space to the output space, is assumed to be the composition of a softmax operation over a continuous piecewise affine (CPA) spline  operator. $\score$ is the density of a random variable ($\bz$) in the latent space, under the decoder transformation. Calculating $\score$ only requires a pre-trained decoder.}
    \label{fig:enter-label}
\end{figure}

\paragraph{Analytical formula for $\score$} 
DGNs for discrete sequences typically output a matrix of logits, transformed into normalized scores by the softmax function:
\begin{align}
    \dec(\bz) = \text{Softmax}(\declog(\bz)).
\end{align}

$\declog(\bz)$ is a $\vocsize \times \seqlen$ logits matrix ($\vocsize$ - vocabulary size, $\seqlen$ - sequence length) and $\text{Softmax}$ is the softmax operation applied to every column of $\declog(\bz)$.  In order to avoid a violation of the assumption that 
$\dec$ is bijective, we extend the function's output to include the normalizing constant for each column. We find that parametrizing the output to include the inverse of the normalizing constant (\cref{eq:extended_dgn}), helps in avoiding numerical instabilities that are caused by the constant being potentially very large. With this formulation, we can now derive the sequence density function.

\begin{theorem}[DGN sequence density]\label{thm:img_den}
Let
\begin{align}\label{eq:extended_dgn}
     \dec(\bz) &=  \left(\bp^{(1)}_{\bz}, (c^{(1)}_{\bz})^{-1}, \dots, \bp^{(L)}_{\bz}, (c^{(L)}_{\bz})^{-1}\right)
     = \bx_{\bz}
\end{align}

where $\bp^{(i)}_{\bz} = \text{Softmax}(\declog(\bz))_{.i}$ and $c^{(i)}_{\bz} = \sum_{j=1}^D e^{\declog(\bz)_{ji}}$. Assume that $\declog$ is bijective and can be expressed as a CPA (\cref{eq:nn_cpas}), and that $\bz\sim p_{\bz}$, then the density function of $\dec(\bz)$ is given by:
\begin{align}\label{eq:density_dgn}
    f_{p}(\bs{z})\sqrt{\det\left(
    \sum_{i=1}^\seqlen  (\bs{A}^{\dagger}_{i})^T (\bs{B}_{i})^T    \bs{B}_{i}\bs{A}^{\dagger}_{i}\right)}
\end{align}

for
\begin{align}
   \bs{B}_{i} &=  \left(\text{diag}\left(\frac{1}{(\bs{p}^{(i)}_{\bs{z}})_{1}}, \dots, \frac{1}{(\bs{p}^{(i)}_{\bs{z}})_{\vocsize}}\right), -\bs{1}\frac{1}{c^{(i)}_{\bs{z}}}\right)^T\\
   \bs{A}^{\dagger}_{i} &= \left(\bs{A}^{(1)}_{\omega}, \dots, \bs{A}^{(L)}_{\omega}\right)^{\dagger}_{(i\cdot \vocsize) : (i+1 \cdot \vocsize).},
\end{align}

where $\left(\bs{A}^{(1)}_{\omega}, \dots, 
\bs{A}^{(L)}_{\omega}\right)^{\dagger}$ 
is the Moore Penrose inverse of $\left(\bs{A}^{(1)}_{\omega}, \dots, \bs{A}^{(L)}_{\omega}\right)$, and $f_p$ is the density function of $p_{\bz}$.

\end{theorem}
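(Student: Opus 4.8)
The plan is to recognize the claim as a pushforward (change-of-variables) density computation for the random vector $\bx_{\bz} = \dec(\bz)$ induced by $\bz\sim p_{\bz}$, carried out through the factorization $\dec = \phi\circ\declog$, where $\declog$ is the CPA logits network and $\phi$ is the fixed map sending each column of logits to the pair $(\bp^{(i)}_{\bz},c^{(i)}_{\bz})$ of softmax probabilities and normalizer. The key first observation is that, although the ambient output lives in $\R^{(\vocsize+1)\seqlen}$, each column satisfies the simplex constraint $\sum_j (\bp^{(i)}_{\bz})_j = 1$, so the (injective, hence embedded) image of $\dec$ is a manifold of dimension equal to that of the latent space. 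Consequently the naive square Jacobian determinant does not apply, and I would invoke the generalized change-of-variables (area) formula: with $J=\nabla\dec(\bz)$ of full column rank, the density of $\dec(\bz)$ with respect to Hausdorff measure on the image equals $f_{\bz}(\bz)/\sqrt{\det(J^\top J)}$, equivalently $f_{\bz}(\bz)\sqrt{\det\!\big(J^\dagger (J^\dagger)^\top\big)}$, using the identity $J^\dagger(J^\dagger)^\top = (J^\top J)^{-1}$. This is exactly what produces the square-root determinant factor in the \emph{numerator} of \cref{eq:density_dgn}.

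Second, I would compute $J^\dagger$ by the chain rule together with the CPA structure. On the region $\omega\ni\bz$ the logits network is affine, $\declog(\bz)=\bA_\omega\bz+\bb_\omega$, so $\nabla\declog=\bA_\omega$ is constant and the formula holds on the interior of each region (region boundaries are Lebesgue-null, so the density is defined almost everywhere); bijectivity of $\declog$ makes each $\bA_\omega$ invertible. Writing $J=\nabla\phi\,\bA_\omega$ and using the pseudoinverse product rule $(\nabla\phi\,\bA_\omega)^\dagger=\bA_\omega^{\dagger}(\nabla\phi)^\dagger$ --- valid because $\nabla\phi$ has full column rank and $\bA_\omega$ is invertible --- reduces the problem to the sub-block $\bs{A}^{\dagger}_i$ of $\bA_\omega^{\dagger}$ attached to sequence position $i$ together with the per-column Jacobian of $\phi$. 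The crucial simplification is that $\phi$ acts column by column, so $\nabla\phi$ is block diagonal; this is what makes the Gram matrix $J^\dagger(J^\dagger)^\top$ decouple into a single sum over positions $i=1,\dots,\seqlen$ with no cross terms between positions.

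Third, I would evaluate the per-column factor explicitly. Since $\phi$ is a bijection onto its image with inverse $\phi^{-1}(\bp,c)_j=\log(\bp_j)+\log c$ (recovering the logits from the probabilities and the normalizer), its inverse-Jacobian is the $\vocsize\times(\vocsize+1)$ matrix $\big[\,\diag\big(1/(\bp^{(i)}_{\bz})_1,\dots,1/(\bp^{(i)}_{\bz})_{\vocsize}\big)\mid \ones/c^{(i)}_{\bz}\,\big]$, whose transpose is precisely $\bs{B}_i$ in the statement; by the inverse-function identity for immersions this inverse-Jacobian coincides with the pseudoinverse of $\nabla\phi_i$, so $(\nabla\phi)^\dagger=\mathrm{blockdiag}(\bs{B}_1^\top,\dots,\bs{B}_\seqlen^\top)$. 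Substituting and forming $J^\dagger(J^\dagger)^\top$ then collapses, block by block, to $\sum_{i=1}^{\seqlen}(\bs{A}^{\dagger}_i)^\top\bs{B}_i^\top\bs{B}_i\bs{A}^{\dagger}_i$, and taking $\sqrt{\det(\cdot)}$ and multiplying by $f_{\bz}(\bz)$ yields \cref{eq:density_dgn}.

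I expect the main obstacle to lie in the first two steps rather than the explicit column calculation: one must justify that the correct object is the lower-dimensional Hausdorff density on the embedded image (because the softmax outputs are confined to a product of simplices and are not full-dimensional in the ambient space), and then track the Moore--Penrose inverse through the composition while respecting the block structure so that the off-diagonal position--position terms vanish and only the stated sum survives. The transpose and indexing conventions in the definitions of $\bs{A}^{\dagger}_i$ and $\bs{B}_i$ must be pinned down so that the block decomposition of $\bA_\omega^{\dagger}$ aligns with the block-diagonal $(\nabla\phi)^\dagger$; getting these conventions consistent, and verifying the pseudoinverse product identity under the stated rank conditions, is the delicate bookkeeping on which the clean final formula depends.
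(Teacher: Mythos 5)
Your overall strategy is the same as the paper's: factor $\dec=\phi\circ\declog$ through the extended (column-wise) softmax, apply a change-of-variables argument on the lower-dimensional image, and use the block structure across positions to collapse the Gram matrix into the sum in \cref{eq:density_dgn}. Indeed, your $\bs{B}_i$ blocks are exactly those computed in the paper's \cref{lem:jac}. But there is one decisive difference: the paper never forms the Moore--Penrose pseudoinverse of the \emph{forward} Jacobian. It differentiates the explicit extended-inverse formula $(\bs{p},c)\mapsto\log\bs{p}+\log c$ and substitutes that \emph{ambient} inverse Jacobian into the change-of-variables identity, so that \cref{eq:density_dgn} is, by construction, the Gram determinant of that ambient object. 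Your proof instead starts from the area formula, whose volume factor is $1/\sqrt{\det(J^\top J)}=\sqrt{\det(J^\dagger (J^\dagger)^\top)}$ for the forward Jacobian $J$, and then tries to convert this into \cref{eq:density_dgn}.

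That conversion is where your argument fails. You assert that the inverse-function Jacobian $\bs{B}_i^\top$ coincides with $(\nabla\phi_i)^\dagger$ ``by the inverse-function identity for immersions.'' This is false: $\bs{B}_i^\top$ is \emph{a} left inverse of $\nabla\phi_i$, but the Moore--Penrose pseudoinverse is the unique left inverse whose rows lie in the range of $\nabla\phi_i$. Here the range of $\nabla\phi_i$ is the tangent space of the image manifold, i.e.\ vectors $(\delta\bs{p},\delta c)$ with $\ones^\top\delta\bs{p}=0$ (differentiate the constraint $\ones^\top\bs{p}^{(i)}_{\bs{z}}=1$), whereas the $j$-th row of $\bs{B}_i^\top$ has $\bs{p}$-part equal to $1/(\bs{p}^{(i)}_{\bs{z}})_j$ in coordinate $j$ and zero elsewhere, whose entries sum to $1/(\bs{p}^{(i)}_{\bs{z}})_j\neq 0$. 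So $\bs{B}_i^\top\neq(\nabla\phi_i)^\dagger$ at every point. Writing $\bs{B}_i^\top=(\nabla\phi_i)^\dagger+N_i$ with $N_i\nabla\phi_i=0$, the cross terms vanish and $\bs{B}_i^\top\bs{B}_i=(\nabla\phi_i^\top\nabla\phi_i)^{-1}+N_iN_i^\top$ with $N_iN_i^\top\neq 0$; hence, carried out faithfully, your route yields $f_{\bs{z}}(\bs{z})/\sqrt{\det(J^\top J)}$, which does \emph{not} simplify to \cref{eq:density_dgn} --- you land on the stated formula only because of this false identity. A secondary problem is your pseudoinverse product rule: you justify $(\nabla\phi\,\bA_\omega)^\dagger=\bA_\omega^\dagger(\nabla\phi)^\dagger$ by ``$\bA_\omega$ is invertible,'' but in the LSO setting $\bA_\omega$ is a tall $\seqlen\vocsize\times d$ matrix with $d\ll\seqlen\vocsize$ (full column rank at best), and full column rank of both factors does not imply the product rule. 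To be fair, your Hausdorff-measure framing puts a finger on a real subtlety --- the ambient inverse Jacobian used in the paper's own change-of-variables step is not the tangential (pseudoinverse) one --- but as a proof of the theorem as stated, your argument contains steps that fail and cannot be repaired without changing either the route or the formula it is supposed to produce.
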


The proof is provided in \cref{sec:proofs}. We define $\score$ to be the logarithm of the determinant term,
\begin{align}\label{eq:score_rho}
    \scoref(\bz) &=  \log\left(\sqrt{\det\left(
    \sum_{i=1}^\seqlen  (\bs{A}^{\dagger}_{i})^T (\bs{B}_{i})^T    \bs{B}_{i}\bs{A}^{\dagger}_{i}\right)}\right),
\end{align}
as the contribution of the prior is negligible in magnitude in all the decoders we study.
\begin{remark}
$\score$ can be calculated directly without assuming the decoder logits follow a CPA function \citep{ben1999change}. However, using the expessions derived in \cref{eq:score_rho} has two computational benefits for calculating the derivative of $\score$.
First, using Jacobi's formula and observing that $\sum_{i=1}^\seqlen  (\bs{A}^{\dagger}_{i})^T (\bs{B}_{i})^T    \bs{B}_{i}\bs{A}^{\dagger}_{i}$ is a quadatic formula of the softmax probabilites, we can calculate the derivative of $\score$ in closed form. Second, by the CPA assumption, the matrices  $\bs{A}^{(i)}_{\omega}$ are a constant function of $\bz$ and therefore $\frac{\partial \bs{A}^{(i)}_{\omega}}{\partial \bz} = 0$. As a result, we avoid the need to calculate the hessian of the decoder when taking the derivative of $\score$.
\end{remark}

\paragraph{Limitations of \cref{thm:img_den}}
Our derivation relies on the decoder logits being (i) a CPA operator and (ii) bijective between the latent space and the generated manifold in the ambient space. We argue that (i) is not a restrictive assumption, as approximation theory has already demonstrated that any continuous model can be approximated by a CPA network. Therefore, one always recovers Theorem 5 even when using non-CPA models (see \cite{daubechies2022neural}).

On the other hand, (ii) is a stronger assumption that practitioners should be mindful of, as it would invalidate ScaLES as a meaningful metric for comparing different samples. For (ii) to be violated, i.e., for \cref{eq:cov} to be incorrect, the Lebesgue measure of the set $C_{\dec} = \{\bz ;|; \exists \bz^* ; \dec(\bz) = \dec(\bz^*)\}$ must be larger than 0 (see \cref{lem:mes_0} for proof). This would suggest some degeneracy in the decoder function, where large regions of the latent space map to the same output, resulting in a zero gradient of the decoder with respect to its input. However, we believe this is rare in practice. In our experiments, where we compute the gradient of the decoder with respect to the input to calculate $\score$, we did not encounter instances where the gradient was zero.

Although we do not formally validate (ii) (the bijectivity assumption), we argue through our empirical analysis in \cref{tab:roc_scores}, conducted across 22 VAEs (including pre-trained models), that (ii) may hold or, at the very least, serve as a reasonable approximation for real-world VAEs.  

Lastly, it is crucial to highlight that the ability of $\score$ to detect out-of-distribution data is closely tied to the decoder's capacity to accurately model the data. Although \cref{thm:img_den} holds for poorly trained decoders under the given conditions, we advise against relying on $\score$ in such cases.

\paragraph{Computing \score}

$\score$ is a function of the matrices $\bs{B}_i$ and $\bs{A}_\omega$.
The matrices $\bs{B}_i$ are a function of the logits and can be calculated using a single forward run of $\dec$. The matrix $\bs{A}_\omega$ is equal to the derivative of $\declog$ at $\bz$, and can therefore can be obtained using automatic differentiation (\citep{paszke2017automatic}). To  avoid the (pseudo) inversion of the matrix $\bs{A}_\omega$ we can exploit the inverse function theorem which states that $J\dec^{-1} = (J\dec)^{-1}$ and we can take $\scoref(\bz) = -0.5\log\det((A_\omega \frac{\partial \dec(\declog)}{\partial \declog})(A_\omega \frac{\partial \dec(\declog)}{\partial \declog})^T )$, where $\frac{\partial \dec(\declog)}{\partial \declog}$ (derivative of the softmax w.r.t the logits) admits a simple closed form solution.  
Ideally, $\score$ can be computed by performing all of the above calculations in parallel using a single forward call to the $\dec$ network. In addition, the computation of the determinant is done via SVD on a square matrix whose dimension is the latent dimension of the decoder ($d$) with complexity of $\mathcal{O}(d^3)$.

In \cref{tab:wc} we provide the wall clock times for calculating $\score$ for a batch of 20 latent vectors across all architectures and datasets studies in this work. $\score$ is compared with the Bayesian uncertainty score proposed by \cite{NEURIPS2021_06fe1c23} (with the default configuration: 10 sampled models and 40 sampled outcomes), which was previously used to regularize LSO. We also compare with a \textit{\likelihood} score:
\begin{align}\label{eq:lik}
    \ell(\bz) = \max_{\bx} p_{\dec}(\boldsymbol{X}=\bx|\boldsymbol{Z}=\bz)
\end{align}

where $p_{\dec}(\bz)$ is the distribution defined by the decoder softmax probabilities, which reflects the likelihood of the most likely $\bx$ conditioned on the latent vector $\bz$.

For 9 out of the 10 models, particularly when the latent dimension is larger (e.g., SMILES and SELFIES), $\score$ is computed faster than the Uncertainty score, achieving reductions of up to 85\% in some cases. As the $\likelihood$ score requires only a single forward pass of the decoder, it offers a more computationally efficient alternative, which comes with some performance trade-offs (\cref{sec:sc_val,sec:exprs}).

\begin{table}[ht]
\centering
\renewcommand{\arraystretch}{1.25} 
\caption{Wall clock times in seconds (lower is better) for calculating $\score$, the Bayesian uncertainty and the $\likelihood$ scores for a sample of 20 latent vectors on single A100 GPU.\\}\label{tab:wc}


\begin{tabular}{llcccc}
\toprule
Dataset & Arch. (dim.)  & \score & Uncertainty & Likelihood \\
\midrule

\multirow{3}{*}{Expressions} 
& GRU   (25) & 0.730 & 0.823 & 0.025 \\
& LSTM    (25) & 0.164 & 0.857 & 0.049 \\
& Transformer  (25) & 0.526 & 0.481 & 0.029 \\
\midrule

\multirow{3}{*}{SMILES} 
& GRU    (56) & 0.663 & 3.157 & 0.103 \\
& LSTM     (56) & 0.696 & 3.990 & 0.123 \\
& Transformer  (56) & 0.581 & 0.726 & 0.185 \\
\midrule

\multirow{4}{*}{SELFIES} 
& GRU         (75)  & 0.498 & 1.925 & 0.064 \\
& LSTM         (75)  & 0.525 & 2.442 & 0.071 \\
& Transformer   (75)  & 0.451 & 0.583 & 0.085 \\
& Transformer (256) & 7.422 & 42.882 & 0.410 \\
\midrule

\textbf{Average} & --  & 1.226 & 5.787 & 0.114 \\
\bottomrule
\end{tabular}

\end{table}

\subsection{Validating the Relationship Between LES and Valid Generation
}\label{sec:sc_val}
To assess \(\score\)'s ability to identify valid regions (as defined in \cref{ex:expressions,ex:smiles,ex:filters}), we sample data points in the latent space using the twenty-two VAEs studied in \cref{sec:exprs}. Specifically, we sample 500 data points from three distributions: train, prior ($\mathcal{N}(\bs{0}, \bs{I})$), and out-of-distribution ($\mathcal{N}(\bs{0}, \bs{I}\cdot 5)$). We decode each data point and determine if the decoded sequence is valid.

Identifying if a point in the latent space decodes into a valid sequence can be viewed as a classification problem, in which the different scores (i.e., $\score$ or the Bayesian uncertainty score) provide (unnormalized) probabilities for a sequence being valid. We measure the performance of these scores using the AUROC metric. Besides $\score$, the Bayesian uncertainty score and the $\likelihood$ score, we add three additional baseline scores for comparison. The first is the density of a standard Gaussian (\textbf{$\prior$}), which is the distribution the latent vectors are regularized to follow during VAE training. The second is the polarity score (\textbf{Polarity}) \citep{humayun2022polarity}, based only on the derivative of the decoder logits with respect to the latent vector, which shows the gains due to accounting for the softmax non-linearity in the derivation of $\score$ (\cref{thm:img_den}). We also consider the average distance to the closest three data points within a random sample of 1000 points from the training data in the latent space (\textbf{Train distances}).  

The results are shown in \cref{tab:roc_scores,tab:roc_scores_small}. $\score$ provides the best performance in 18 out of the 22 VAEs in this analysis, and in all cases provides a clear signal for identifying valid regions, as indicated by AUROC values that are at least 0.75. This is while being much faster to compute than the Uncertainty score (\cref{tab:wc}) and without the need to store a potentially large array of latent vectors.

\begin{table}[H]
\caption{AUROC values (higher is better) for identifying valid data points within the latent space, across datasets and decoder architectures. Data points are sampled from the training data, the VAE prior (standard Gaussian), and out-of-distribution data (Gaussian with std of 5). $\score$ achieves the best performance in most cases (18 out of 22) and on average.\\}\label{tab:roc_scores_small}
\centering
\small
\renewcommand{\arraystretch}{1.1} 

\setlength{\tabcolsep}{0.3em} 
{\begin{tabular}
{llclrrr}
\toprule
Dataset & Arch. & $\beta$ & $\score$\;  & Prior & Uncertainty & Likelihood \\
\midrule
\multirow{9}{*}{SMILES} 
& \multirow{3}{*}{GRU} 
    & 0.05 & 0.93 & 0.09 & 0.85 & \textbf{0.94} \\
& 
    & 0.1  & \textbf{0.94} & 0.12 & 0.84 & \textbf{0.94} \\
& 
    & 1.0  & 0.91 & 0.16 & 0.87 & \textbf{0.92} \\ \cline{2-7}
& \multirow{3}{*}{LSTM} 
    & 0.05 & \textbf{0.99} & 0.07 & \textbf{0.99} & 0.98 \\
& 
    & 0.1  & 0.89 & 0.21 & 0.89 & \textbf{0.90} \\
& 
    & 1.0  & \textbf{0.97} & 0.12 & 0.95 & 0.96 \\ \cline{2-7}
& \multirow{3}{*}{Transformer} 
    & 0.05 & \textbf{0.93} & 0.14 & 0.84 & 0.92 \\
& 
    & 0.1  & \textbf{0.94} & 0.14 & 0.87 & 0.93 \\
& 
    & 1.0  & \textbf{0.97} & 0.10 & 0.89 & 0.95 \\ \midrule

\multirow{9}{*}{Expressions} 
& \multirow{3}{*}{GRU} 
    & 0.05 & \textbf{0.96} & 0.38 & \textbf{0.96} & 0.88 \\
& 
    & 0.1  & \textbf{0.94} & 0.42 & \textbf{0.94} & 0.80 \\
& 
    & 1.0  & \textbf{0.94} & 0.57 & \textbf{0.94} & 0.89 \\ \cline{2-7}
& \multirow{3}{*}{LSTM} 
    & 0.05 & \textbf{0.96} & 0.38 & 0.90 & \textbf{0.96} \\
& 
    & 0.1  & \textbf{0.95} & 0.37 & 0.91 & \textbf{0.95} \\
& 
    & 1.0  & \textbf{0.95} & 0.56 & 0.91 & 0.91 \\ \cline{2-7}
& \multirow{3}{*}{Transformer} 
    & 0.05 & \textbf{0.91} & 0.43 & 0.86 & 0.90 \\
& 
    & 0.1  & \textbf{0.91} & 0.53 & 0.87 & 0.89 \\
& 
    & 1.0  & 0.86 & 0.70 & \textbf{0.92} & \textbf{0.92} \\ \midrule

\multirow{3}{*}{SELFIES} 
& \multirow{3}{*}{Transformer} 
    & 0.05 & \textbf{1.0} & 0.02 & 0.99 & 0.97 \\
& 
    & 0.1  & \textbf{0.99} & 0.03 & 0.96 & 0.98 \\
& 
    & 1.0  & \textbf{0.95} & 0.06 & 0.85 & 0.93 \\ \midrule

SELFIES~ (\cite{maus2022local})
& Transformer 
    & --   & \textbf{0.75} & 0.69 & 0.33 & 0.70 \\

\midrule
& Average  & & \textbf{0.93} & 0.29 & 0.88 & 0.91 \\
\bottomrule
\end{tabular}}

\end{table}

\section{LES-Constrained LSO}\label{sec:exprs}
In \cref{sec:sc_val} we showed that $\score$ is a robust score that obtains higher values in the latent space valid set (\cref{def:val_reg}). Furthermore, $\score$ is differentiable, which means it can easily be used to constrain any optimization problem. Therefore, we propose adding an explicit constraint to \cref{eq:acq_opt}, \oldchange{encouraging the solution to achieve a high $\score$ value}. We modify \cref{alg:lso} by penalizing step (2):

\begin{align}\label{eq:lso_constrained}
    \bz^{\text{new}} = \arg\max_{\bz} \acq(\sur(\bz)) + \textcolor{red}{\lambda\scoref(\bz)}.
\end{align}
\subsection{Experimental Setup}
\paragraph{VAE models}
To evaluate the effectiveness of $\score$ as a regularization method for LSO, we trained twenty-two VAEs, focusing on varying the decoder architectures and the $\beta$ parameter, which controls the trade-off between reconstruction loss and alignment with the prior (KL divergence term). All models use a convolutional encoder based on the architecture proposed by \cite{kusner2017grammar}, and were trained for 300 epochs using the Adam optimizer \cite{kingma2014adam} with a learning rate of 1e-3 and batch size of 256.

The VAEs for the Expressions dataset, sourced from \cite{kusner2017grammar}, were trained on 80k data points with a latent dimension of 25. The SMILES VAEs were trained on the ZINC250k dataset, consisting of approximately 250k drug-like molecules in SMILES format. Following \cite{kusner2017grammar} and \cite{NEURIPS2021_06fe1c23}, a latent dimension of 56 was used. For the SELFIES VAEs, we used a subset of approximately 200k molecules from the ZINC250k dataset that passed a set of quality filters  \cite{Walters2019}, using the SELFIES representation \cite{krenn2020self}, with a latent dimension of 75. Additionally, the pre-trained VAE by \cite{maus2022local} had a latent dimension of 256.

\paragraph{LSO setup}
\change{We begin by training a single-task Gaussian Process on an initial dataset. Each sample is mapped to a latent space and paired with its true objective value. For Expressions and SMILES VAEs, we use 500 data points. For SELFIES VAEs, we use 1500. Across all tasks, we generate 500 candidate solutions per problem, aligned with prior work \cite{kusner2017grammar,NEURIPS2021_06fe1c23} and reflective of real-world wet-lab constraints \cite{gao2022sample}. Solutions are proposed in batches of 20.}

For the SELFIES VAEs (both from \cite{maus2022local} and those trained by us), we employ a deep kernel that reduces the latent space to 12 dimensions before fitting the Gaussian Process. To mitigate vanishing gradients, we use log expected improvement \cite{ament2024unexpected} as our acquisition function, which is sequentially maximized.

\paragraph{Optimization tasks}
The Expressions dataset consists of arithmetic expressions that are functions of a single variable (e.g., $\sin(x)$, $1 + x*x$). Our objective is to find an expression that approximates $\texttt{1/3 + x + sin(x * x)}$, as described by \cite{kusner2017grammar}. The optimization target is defined as $\obj(\bx) = -\log(1 + \text{MSE}(\bx))$, where $\text{MSE}(\bx)$ is the mean-squared error between the expression $\bx$ and $\texttt{1/3 + x + sin(x * x)}$, evaluated over the range -10 to 10 using a grid of 1000 equally spaced points.

For the SMILES dataset, our goal is to maximize the octanol-water partition coefficient, which is calculated using the prediction model developed by \cite{wildman1999prediction}. In the case of the SELFIES dataset, following \cite{maus2022local}, we focus on three objectives: Perindopril MPO, Ranolazine MPO, and Zaleplon MPO, all of which are part of the Guacamol benchmarks \cite{brown2019guacamol}. While the SELFIES syntax is 100\% robust, we consider only solutions that pass a set of quality filters for evaluation (see \cref{ex:filters} for more details).

\paragraph{$\score$-constrained LSO}
For ease of implementation and numerical stability, when applying $\score$ regularization, we adopt a simple optimization procedure in which the acquisition function is optimized using 10 steps of normalized gradient ascent. This is because we empirically find that the norm of the derivative of the constraint (i.e., $\scoref(\bz)$) is typically much larger than the norm of the derivative of the acquisition function. As a result, using the gradient ascent update rule $\bz^{(i+1)} = \partial \acq(\sur(\bz^{(i)})) + \lambda \partial \scoref(\bz^{(i)})$ leads to a numerically unstable optimization process. To address this issue, we propose the following update rule: \begin{align} \label{eq:update_rule}\bz^{(i+1)} = \frac{\partial \acq(\sur(\bz^{(i)}))}{\Vert \partial \acq(\sur(\bz^{(i)}))\Vert_2} + \lambda \frac{\partial \scorefrho(\bz^{(i)})}{\Vert \partial \scorefrho(\bz^{(i)})\Vert_2}. \end{align}

We set $\lambda=0.05$ for Expressions, $\lambda=0.1$ for our SELFIES models, and $\lambda=0.5$ for SMILES and the pre-trained SELFIES-VAE. \change{For $\rano$ with pre-trained SELFIES-VAE, we use $\lambda=0.1$ to prevent over-regularization, as regularized methods already yield a high percentage of valid solutions (\cref{tab:validity})}.  We select these values because, without regularization, the SMILES and pre-trained SELFIES models tend to produce a lower percentage of valid solutions (\cref{tab:validity}). Based on our findings, we suggest $\lambda=0.5$ as a reasonable default value, while leaving the exploration of an optimal choice for future work.

\paragraph{Benchmark methods}
We compare our $\score$-constrained method (\textbf{\score}) with five alternative approaches for optimizing the acquisition function. First, we evaluate a non-regularized version of \cref{eq:update_rule}
, where $\lambda=0$ (\textbf{\noreg}), and a two regularization methods that uses the prior density (i.e., $\ell_2$ regularization) and likelihood  (\cref{eq:lik}) instead of $\score$ (\textbf{\prior}, and \textbf{\likelihood} respectively), using similar $\lambda$ values described above.

Additionally, we consider optimizing the acquisition function using the Limited-memory BFGS method within a symmetric hypercube centered at 0 (\textbf{\bound}), which is the default approach in the \texttt{BoTorch} package \cite{balandat2020botorch}. Since recent state-of-the-art LSO pipelines \cite{maus2022local, lee2024advancing} have utilized trust regions centered around the best observed value \cite{eriksson2019scalable}, we also compare with this approach (\textbf{\turbo}). Lastly, we implement the Uncertainty Censoring (\textbf{\uc}) method proposed by \cite{NEURIPS2021_06fe1c23}, which suggests early stopping of the optimization when the estimated uncertainty exceeds a certain threshold. For this threshold, we use the 99th percentile of the observed uncertainty values in the training data, as recommended by \cite{NEURIPS2021_06fe1c23}.

\paragraph{Hyperparameters} 
We calibrate the step size, which affects \textbf{\score}, \textbf{\uc}, \textbf{\prior}, \textbf{\likelihood}, and \textbf{\noreg}, to ensure our gradient ascent procedure (with $\lambda=0$) improves the acquisition function values across different initializations. The same step size is applied to all models within the same dataset: Expressions = 0.8, SMILES = 0.003, SELFIES = 0.03, and SELFIES pre-trained = 0.3. The $\bound$ method has a single hyperparameter, the facet length, which is set to 5. For $\turbo$, there are three primary hyperparameters: the initial length, which we set to 0.8, along with the success and failure tolerances, determining when to expand or shrink the trust region, set at 10 and 2, respectively. An ablation study is provided in \cref{sec:ablation}.

\subsection{Results}

\paragraph{Optimization results}  The experimental results, presenting the average across 10 independent LSO runs for the top 20 and best solutions found, are summarized in \cref{tab:top20,tab:top1} respectively. In both cases, $\score$ achieves the average best performance most frequently (22 and 17 out of 30 times, respectively). Furthermore, $\score$ outperforms other methods in both the average ranking across LSO tasks and the frequency with which it falls within one standard deviation of the best-performing method (\cref{tab:perf_short}). These findings demonstrate that using $\score$ as a regularization technique generally enhances optimization performance.  

\change{\Cref{fig:cumobj1,fig:cumobj20} show the cumulative  average top-20 and the best objective values during BO with the pre-trained SELFIES-VAE \cite{maus2022local}, respectiveley. Results align with \cite{maus2022local} under our realistic evaluation budget. For the average of the top-20 solutions on the 
$\rano$ and $\zale$ tasks, $\score$ outperforms $\turbo$ on valid solutions but underperforms overall, highlighting that $\score$ effectively constrains the optimzation.}

\begin{figure}
    \centering
    \includegraphics[width=0.6\linewidth]{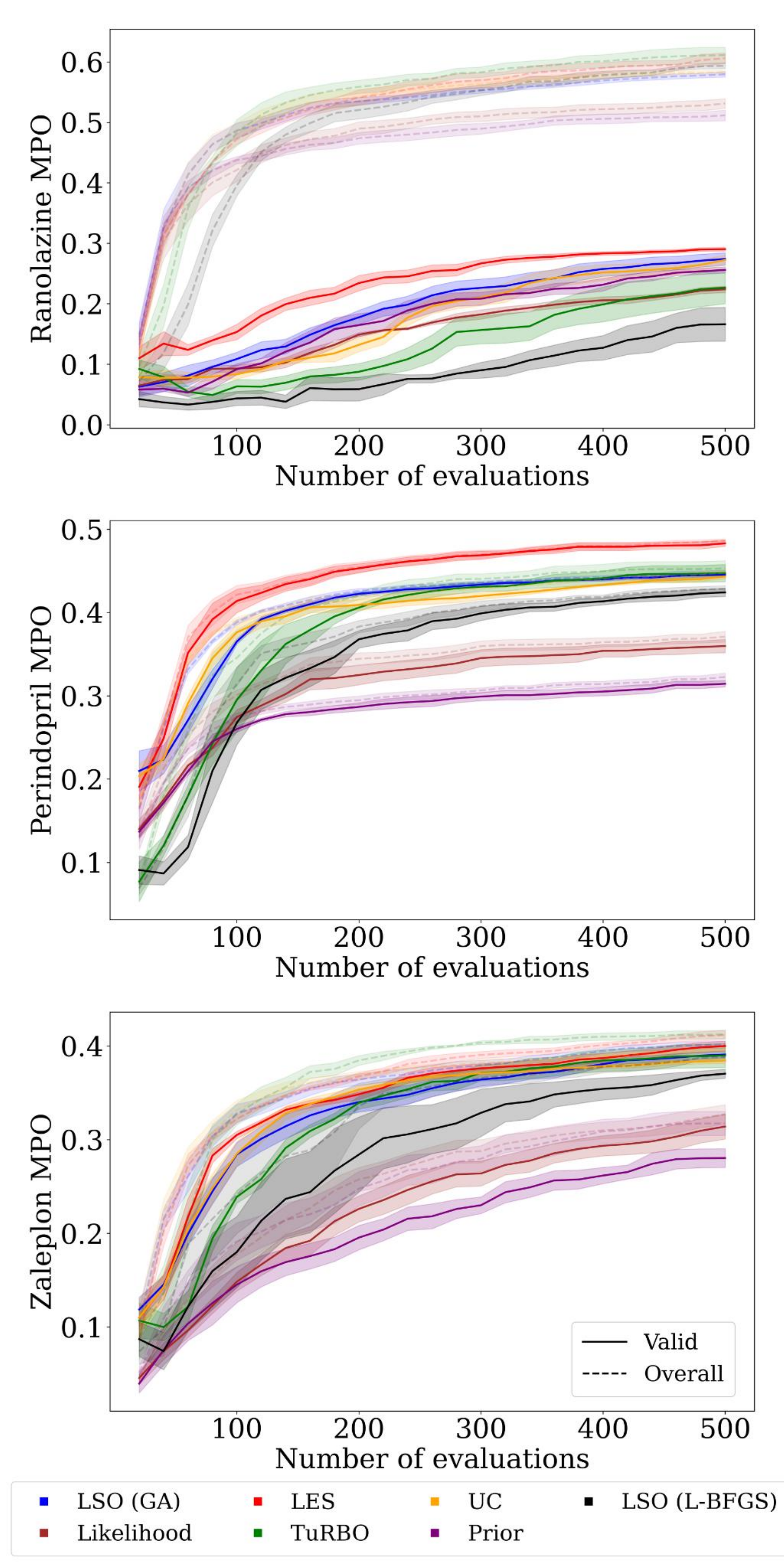}
    \caption{\change{Cumulative objective for the top-20 solutions found during Bayesian optimization with the pre-trained SELFIES-VAE (\cite{maus2022local}). Each method is shown in a distinct color. Solid lines represent solutions passing quality filters, while dashed lines include all evaluations. $\score$ outperforms all baselines on $\rano$ and $\pdop$, achieving competitive results on $\zale$.}}
    \label{fig:cumobj20}
\end{figure}

\begin{table}[h]
    \centering
    \renewcommand{\arraystretch}{1.2}
    \caption{\change{Summary metrics for 30 Bayesian Optimization experiments from \cref{tab:top1,tab:top20}. We report the average rank (lower is better) and the count of times each method is within one standard deviation of the best performing method (higher is better). Results cover both the best solution and the top-20 average. $\score$ outperforms alternatives on both metrics.}\\}
        \small

    \begin{tabular}{lcccc}
        \toprule
        & \multicolumn{2}{c}{\textbf{Top 1}} & \multicolumn{2}{c}{\textbf{Top 20}} \\
        \cmidrule(lr){2-3} \cmidrule(lr){4-5}
        \textbf{Method} & \textbf{Avg Rank} & \textbf{\# within std} & \textbf{Avg Rank} & \textbf{\# within std} \\
        \midrule
        $\score$       & \textbf{2.15}  & \textbf{19}  & \textbf{1.9}  & \textbf{22}  \\
        $\likelihood$  & \underline{2.76}  & \underline{16}  & \underline{2.53}  & \underline{13}  \\
        $\noreg$       & 3.15           & 8            & 2.87          & 7            \\
        $\prior$       & 3.86           & 10           & 3.8            & 5            \\
        $\uc$          & 4.17            & 5            & 4.3            & 3            \\
        $\turbo$       & 5.7           & 2            & 6.1           & 0            \\

        $\bound$       & 6.17           & 1            & 6.4          & 0            \\
        \bottomrule
    \end{tabular}

    \label{tab:perf_short}
\end{table}

\cref{tab:validity} shows the percentage of valid solutions found by each method across datasets and VAEs. $\score$ improves upon the non-regularized version of gradient ascent by 7\% on average and upon $\turbo$ and $\bound$ by 24\% and 36\% on average respectively. While $\uc$ achieves a 2\% higher percentage of valid solutions on average, we show in \cref{tab:exp_lambda} that for the Expressions datasets, where $\uc$ excels, the number of valid produced by $\score$ solutions can be increased by setting a higher $\lambda$ value. However, this did not improve optimization performance.

\section{Discussion}\label{sec:dicussion}
We proposed $\score$ to mitigate over-exploration in latent space optimization (LSO). $\score$ is differentiable and fully parallelizable. Extensive evaluations demonstrate that incorporating $\score$ as a penalty in LSO consistently enhances solution quality and objective outcomes. Moreover, $\score$ outperforms alternative regularization techniques, proving to be the most robust across diverse datasets and varying definitions of validity. In addition, $\score$ has only a single hyperparameter (the regularization strength), and we observe empirically that deploying $\score$ can provide significant performance gains. We therefore believe $\score$ offers a powerful approach for discovering more realistic solutions, when the criteria for realism are difficult to define or validate.

While $\score$ is fully parallelizable, it requires the calculation of the derivative of the decoder as well as the determinant of the change-of-variables term, which can be computationally expensive. This step can become a bottleneck when the size of the output and the latent dimension are both large. It is left for future work to develop a fast approximation for this operation in order to enable the use of $\score$ in applications involving large generative models.

\section*{Acknowledgements}
Ronen and Yu gratefully acknowledge the support of the NSF for FODSI through grants DMS-2023505 and DMS/NIGMS-R01GM152718 of the NSF and the Simons Foundation for the Collaboration on the Theoretical Foundations of Deep Learning through awards DMS-2031883 and \#814639, DMS-2210827, CCF-2315725  and of the ONR through MURI award N000142112431 and N00014-24-S-B001. 
Humayun and Baraniuk gratefully acknowledge the support from NSF grants CCF-1911094, IIS-1838177, and IIS-1730574; ONR grants N00014-18-1-2571, N00014-20-1-2534, N00014-23-1-2714, and MURI N00014-20-1-2787; AFOSR grant FA9550-22-1-0060; DOI grant 140D0423C0076; and a Vannevar Bush Faculty Fellowship, ONR grant
N00014-18-1-2047.
\newpage

\newpage
\bibliography{main_arxiv}

\newpage
\appendix

\section{Proofs}\label{sec:proofs}
\begin{lemma}\label{lem:jac}
    Let $f_\theta$ be a DGN as defined in \cref{eq:extended_dgn} and assume that $f_\theta$ can be expressed as a CPA (\cref{eq:nn_cpas}) and is inevitable, then
    \begin{align}
        J{f^{-1}_\theta}(\bx) &=  \left(\begin{bmatrix}
\bs{B}_{1} & \cdots & \bs{0} \\
\vdots & \ddots & \vdots \\
\bs{0} & \cdots & \bs{B}_{\seqlen}
\end{bmatrix}\bs{A}_{\omega}^{\dagger}\right)^T,
    \end{align}
    where $\bs{A}_{\omega}^{\dagger}$ is the Moore Penrose inverse of the slope matrix, at the knot whose image constrains $\bx$, and
\begin{align}
   \bs{B}_{i} &=  \left(\text{diag}\left(\frac{1}{(\bs{p}^{(i)}_{\bs{z}})_{1}}, \dots, \frac{1}{(\bs{p}^{(i)}_{\bs{z}})_{\vocsize}}\right), -\bs{1}\frac{1}{c^{(i)}_{\bs{z}}}\right)^T.
\end{align}
\end{lemma}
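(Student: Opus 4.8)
The plan is to exploit the compositional structure of the extended decoder. I would write $\dec = g \circ \declog$, where $\declog:\mathcal{Z}\to\R^{\vocsize\cdot\seqlen}$ is the logits network and $g$ is the map sending each column of logits $\ell^{(i)}\in\R^{\vocsize}$ to the pair $(\bs{p}^{(i)},c^{(i)})$ of softmax scores and normalizing constant. Because $g$ acts on the $\seqlen$ columns independently, it is block-structured: $g(\ell^{(1)},\dots,\ell^{(\seqlen)}) = (g_0(\ell^{(1)}),\dots,g_0(\ell^{(\seqlen)}))$ for a single-column map $g_0:\R^{\vocsize}\to\R^{\vocsize+1}$. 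Since $\dec=f_\theta$ is assumed bijective, so are the two factors on the relevant images, and $\dec^{-1} = \declog^{-1}\circ g^{-1}$. The chain rule then gives $J\dec^{-1}(\bx) = J\declog^{-1}\cdot Jg^{-1}$, so the whole argument reduces to computing the two Jacobians separately.

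For the logits factor I would use that $\declog$ is CPA (\cref{eq:nn_cpas}): on the region $\omega$ whose image contains $\bx$, the map $\declog(\bz) = \bA_\omega\bz + \bs{b}_\omega$ is affine. Bijectivity forces $\bA_\omega$ to have full column rank, so the restriction of $\declog$ to this region is an affine injection whose inverse on the image is affine with slope equal to the left inverse $\bA_\omega^\dagger = (\bA_\omega^T\bA_\omega)^{-1}\bA_\omega^T$. Hence $J\declog^{-1}$ is exactly the Moore--Penrose pseudo-inverse of the slope matrix, and is constant on the region.

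For the softmax factor I would compute $Jg_0^{-1}$ in closed form. The key observation is that $g_0$ is explicitly invertible on its image: from $\bs{p}^{(i)}_j = e^{\declog(\bz)_{ji}}/c^{(i)}$ and $c^{(i)} = \sum_k e^{\declog(\bz)_{ki}}$ one recovers $\ell^{(i)}_j = \log \bs{p}^{(i)}_j + \log c^{(i)}$. Differentiating this expression in the coordinates $(\bs{p}^{(i)},c^{(i)})$ gives $\partial \ell^{(i)}_j/\partial \bs{p}^{(i)}_m = \delta_{jm}/\bs{p}^{(i)}_j$ and $\partial \ell^{(i)}_j/\partial c^{(i)} = 1/c^{(i)}$, i.e. the $\vocsize\times(\vocsize+1)$ matrix $[\,\mathrm{diag}(1/\bs{p}^{(i)}_1,\dots,1/\bs{p}^{(i)}_{\vocsize}),\ \bs{1}/c^{(i)}\,]$, which is precisely $\bs{B}_i^T$. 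Assembling the $\seqlen$ independent blocks shows $Jg^{-1}$ is block-diagonal with blocks $\bs{B}_i^T$.

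Combining the factors yields $J\dec^{-1}(\bx) = \bA_\omega^\dagger\cdot\mathrm{blockdiag}(\bs{B}_1^T,\dots,\bs{B}_{\seqlen}^T)$, which coincides with the stated expression $\big(\mathrm{blockdiag}(\bs{B}_1,\dots,\bs{B}_{\seqlen})\,\bA_\omega^\dagger\big)^T$ after transposing and using $\mathrm{blockdiag}(\bs{B}_i)^T = \mathrm{blockdiag}(\bs{B}_i^T)$ together with the paper's transposed slope-matrix convention. The main obstacle I anticipate is the non-square geometry: $g_0$ maps $\R^{\vocsize}$ into a $\vocsize$-dimensional submanifold of $\R^{\vocsize+1}$, and $\declog$ maps $\mathcal{Z}$ into $\R^{\vocsize\cdot\seqlen}$ of much larger dimension, so "invertibility" and "Jacobian of the inverse" must be read as left inverses on the images. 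This is exactly where the pseudo-inverse rather than an ordinary inverse enters, and where careful bookkeeping of which matrix is transposed is needed; everything else is routine differentiation and the chain rule.
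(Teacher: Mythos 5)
Your proof is correct and follows essentially the same route as the paper's: factor the extended decoder into the column-wise softmax-with-normalizer map composed with the CPA logits network, invert each factor explicitly (the closed-form inverse $\ell_j = \log p_j + \log c$ yielding the block-diagonal Jacobian with blocks $\bs{B}_i^T$, and the affine piece yielding $\bs{A}_\omega^{\dagger}$), then combine via the chain rule. If anything, your write-up is slightly more careful than the paper's, since you justify why the Moore--Penrose pseudo-inverse is the correct inverse slope (full column rank forced by injectivity on a full-dimensional region) and explicitly track the transposition/layout conventions that the paper leaves implicit.
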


\begin{proof}
    First we write 
    \begin{align}
        f_\theta (\bz) &= \text{Softmax}_{+}(\ell_\theta (\bz)),
    \end{align}

    Where $\text{Softmax}_{+}$ is the extension of the column wise Softmax function to include the normalizing constants. Specifically, for $\seqlen$ by $\vocsize$ $\ell_\theta(\bz)$ matrix, we have
    \begin{align}
        \text{Softmax}_{+}(\ell_\theta(\bz)) &= \left(\bp^{(1)}_{\bz}, (c^{(1)}_{\bz})^{-1}, \dots, \bp^{(L)}_{\bz}, (c^{(L)}_{\bz})^{-1}\right) = \bx_{\bz},
    \end{align} 
    with $\bp^{(i)}_{\bz} = (\frac{e^{\ell_\theta(\bz)_{1i}}}{c^{(i)}_{\bz}})$, and $c^{(i)}_{\bz} = \sum_{j=1}^\vocsize \exp({\ell_\theta(\bz)_{ji}})$.\\
    Next,
    \begin{align}
        f_\theta^{(-1)}(\bx) &= \ell^{-1}_\theta (\text{Softmax}_{+}^{-1}(\bx))
    \end{align}
   A direct calculation yields, 
\begin{align}
    \text{Softmax}_{+}^{-1}(\bx) &= \left(\log(\bs{p}^{(1)}_{\bs{z}}) + \log(c^{(1)}_{\bs{z}}), \dots, \log(\bs{p}^{(L)}_{\bs{z}}) + \log(c^{(L)}_{\bs{z}})\right).
\end{align}
    As we assume $\ell_{\theta}$ is bijective and can be written as 
    \begin{align}
        \ell_{\theta} (\bz) &= \sum_{\omega \in \Omega} \left(\bA_{\omega}\bz + \bb_{\omega}\right) 1_{\bz \in \omega}, 
    \end{align}
   we have that 
   \begin{align}
       \ell^{-1}_\theta (\text{Softmax}_{+}^{-1}(\bx)) &=  (\text{Softmax}_{+}^{-1}(\bx) - \bb_{\omega}) \bA_{\omega}^\dagger.
   \end{align}
    Lastly, as 
    \begin{align}
        \frac{\partial \text{Softmax}_{+}^{-1}(\bx)}{\partial \bx} &= \begin{bmatrix}
\bs{B}_{1} & \cdots & \bs{0} \\
\vdots & \ddots & \vdots \\
\bs{0} & \cdots & \bs{B}_{\seqlen}
\end{bmatrix}, 
    \end{align}
    for 
    \begin{align}
   \bs{B}_{i} &=  \left(\text{diag}\left(\frac{1}{(\bs{p}^{(i)}_{\bs{z}})_{1}}, \dots, \frac{1}{(\bs{p}^{(i)}_{\bs{z}})_{\vocsize}}\right), -\bs{1}\frac{1}{c^{(i)}_{\bs{z}}}\right)^T.
\end{align}

    we obtain the final result.
\end{proof}
    
\begin{proof}[Proof of \cref{thm:img_den}]\label{prf:img_den}
First, we note that by our invertibility assumption we have that $\P(\bx \in W) = \P(\bz \in f_\theta^{(-1)} (W))$. We then proceed with a direct calculation
\begin{align}
    \P(\bx \in W) &= \P(\bz \in f_\theta^{(-1)}(W))\\
                &= \sum_{\omega \in \Omega}  \P(\bz \in (f_\theta^{(-1)}(W)\cap \omega))\\
                &=  \sum_{\omega \in \Omega} \intop_{f_\theta^{(-1)}(W)\cap \omega}f_{\bz}(\bz) d\bz\\
                &= \sum_{\omega \in \Omega}\intop_{W\cap f_\theta(\omega)}f_{\bz}(f_\theta^{(-1)}(\bx)) \sqrt{\det{\left(Jf_\theta^{(-1)}(\bx) Jf_\theta^{(-1)}(\bx)^T\right)}} d\bx\label{eq:cov}\\
                &=\intop_{W} \sum_{\omega \in \Omega} f_{\bz}(f_\theta^{(-1)}(\bx)) \sqrt{\det{\left(Jf_\theta^{(-1)}(\bx) Jf_\theta^{(-1)}(\bx)^T\right)}}1_{\{\bx\in f_{\theta}(\omega)\}} d\bx.
\end{align}
Using \cref{lem:jac}, we get that the volume element is
\begin{align}
    Jf_\theta^{(-1)}(\bx) Jf_\theta^{(-1)}(\bx)^T &=
     \left(\begin{bmatrix}
\bs{B}_{1} & \cdots & \bs{0} \\
\vdots & \ddots & \vdots \\
\bs{0} & \cdots & \bs{B}_{\seqlen}
\end{bmatrix}\bs{A}_{\omega}^{\dagger}\right)^T \left(\begin{bmatrix}
\bs{B}_{1} & \cdots & \bs{0} \\
\vdots & \ddots & \vdots \\
\bs{0} & \cdots & \bs{B}_{\seqlen}
\end{bmatrix}\bs{A}_{\omega}^{\dagger}\right)\\&
    \left((\bA_{\omega}^\dagger)^T\begin{bmatrix}
\bs{B}_{1}^T & \cdots & \bs{0} \\
\vdots & \ddots & \vdots \\
\bs{0} & \cdots & \bs{B}_{\seqlen}^T,
\end{bmatrix} \right) \left(\begin{bmatrix}
\bs{B}_{1} & \cdots & \bs{0} \\
\vdots & \ddots & \vdots \\
\bs{0} & \cdots & \bs{B}_{\seqlen}
\end{bmatrix}\bs{A}_{\omega}^{\dagger}\right)\\
&=  \sum_{i=1}^\seqlen  (\bs{A}^{\dagger}_{i})^T (\bs{B}_{i})^T\bs{B}_{i}  \bs{A}^{\dagger}_{i},
\end{align}

where $\bs{A}^{\dagger}_{i} = \left(\bs{A}^{(1)}_{\omega}, \dots, \bs{A}^{(L)}_{\omega}\right)^{\dagger}_{(i\cdot \vocsize) : (i+1 \cdot \vocsize).}$.
\end{proof}

\begin{lemma}\label{lem:mes_0}
    Let $f:\mathcal{Z}\rightarrow\mathcal{X}$ be a function and define $f^\dagger:\mathcal{X}\rightarrow\mathcal{Z}$ as $f^\dagger(x) \in \{z: f(z) = x\}$.  Let $\mu$ be the Lebesgue measure and assume that $\mu(\{z; \exists z' \;\text{s.t.}\; f(z) = f(z')\}) = 0$, then for every $B\subseteq \mathcal{Z}$ we have
    \begin{align}
        \mu(\{z; f(z)\in B\}) = \mu(f^\dagger (B))
    \end{align}
\end{lemma}

\begin{proof}
    We proceed with direct calculation
    \begin{align}
        \mu(\{z; f(z)\in B\}) &\leq \mu(f^\dagger(B)) + \mu(\{z; \exists z' \;\text{s.t.}\; f(z) = f(z')\})
    \end{align}
    Now assume that $\mu(\{z; \exists z' \; \text{s.t.} \; f(z) = f(z')\})=0$, we have that $ \mu(\{z; f(z)\in B\})
    \leq\mu(f^\dagger(B))$. The other direction follows immediately from the definition of $f^\dagger$.
\end{proof}

\cref{lem:mes_0} implies that \cref{eq:cov} can still hold under the assumption that 
$\mu(\{z; \exists z' \;\text{s.t.}\; f_\theta(z) = f_\theta(z')\}) = 0$.

\section{Ablation studies}\label{sec:ablation}
\begin{table}[H]
\centering
\renewcommand{\arraystretch}{1.2} 
\caption{Ablation study for the initial length, success tolerance and failure tolerance for the $\turbo$ method. Average across 10 independent runs, of the best value across datasets, architectures. Column names indicate the length/success/fail values. Results from the main paper are in bold.\\}
\scriptsize
\begin{tabular}{lcccccccccc}
\toprule
 & \textbf{Architecture} & \textbf{$\beta$} & \textbf{0.8/10/10} & \textbf{0.8/10/2} & \textbf{0.8/2/10} & \textbf{0.8/2/2} & \textbf{1.6/10/10} & \textbf{1.6/10/2} & \textbf{1.6/2/10} & \textbf{1.6/2/2} \\
\midrule
\multirow{9}{*}{\rotatebox{90}{\textbf{Expressions}}}
& \multirow{3}{*}{GRU} 
    & 0.05 & -0.65 (0.11) & \textbf{-0.73 (0.12)} & -0.71 (0.09) & -0.61 (0.1) & -0.65 (0.12) & -0.59 (0.06) & -0.62 (0.11) & -0.72 (0.13) \\
& 
    & 0.1 & -0.56 (0.05) & \textbf{-0.61 (0.07)} & -0.59 (0.07) & -0.71 (0.1) & -0.54 (0.04) & -0.58 (0.05) & -0.62 (0.04) & -0.63 (0.04) \\
& 
    & 1 & -0.56 (0.05) & \textbf{-0.54 (0.05)} & -0.54 (0.06) & -0.6 (0.08) & -0.56 (0.06) & -0.61 (0.05) & -0.6 (0.04) & -0.59 (0.05) \\
\cmidrule{2-11}
& \multirow{3}{*}{LSTM} 
    & 0.05 & -0.46 (0.03) & \textbf{-0.43 (0.02)} & -0.43 (0.02) & -0.38 (0.02) & -0.43 (0.02) & -0.43 (0.02) & -0.47 (0.04) & -0.4 (0.01) \\
& 
    & 0.1 & -0.38 (0.04) & \textbf{-0.39 (0.0)} & -0.41 (0.01) & -0.42 (0.02) & -0.42 (0.02) & -0.42 (0.02) & -0.42 (0.02) & -0.42 (0.02) \\
& 
    & 1 & -0.96 (0.09) & \textbf{-0.86 (0.0)} & -0.86 (0.0) & -1.01 (0.11) & -0.88 (0.01) & -0.98 (0.06) & -0.92 (0.04) & -0.88 (0.01) \\
\cmidrule{2-11}
& \multirow{3}{*}{Transformer} 
    & 0.05 & -0.39 (0.04) & \textbf{-0.44 (0.02)} & -0.39 (0.04) & -0.4 (0.05) & -0.44 (0.02) & -0.39 (0.04) & -0.38 (0.04) & -0.42 (0.02) \\
& 
    & 0.1 & -0.38 (0.04) & \textbf{-0.41 (0.02)} & -0.42 (0.02) & -0.42 (0.02) & -0.39 (0.04) & -0.41 (0.01) & -0.41 (0.02) & -0.37 (0.04) \\
& 
    & 1 & -0.67 (0.1) & \textbf{-0.58 (0.1)} & -0.52 (0.08) & -0.62 (0.06) & -0.66 (0.11) & -0.62 (0.07) & -0.56 (0.05) & -0.54 (0.05) \\
\midrule
\multirow{9}{*}{\rotatebox{90}{\textbf{SELFIES}}}
& \multirow{3}{*}{Transformer (pdop)} 
    & 0.05 & 0.13 (0.02) & \textbf{0.15 (0.03)} & 0.17 (0.03) & 0.18 (0.03) & 0.23 (0.03) & 0.17 (0.04) & 0.22 (0.04) & 0.2 (0.04) \\
& 
    & 0.1 & 0.08 (0.02) & \textbf{0.09 (0.03)} & 0.1 (0.03) & 0.09 (0.03) & 0.1 (0.03) & 0.12 (0.04) & 0.1 (0.03) & 0.14 (0.04) \\
& 
    & 1 & 0.36 (0.02) & \textbf{0.31 (0.03)} & 0.36 (0.02) & 0.34 (0.02) & 0.38 (0.01) & 0.38 (0.0) & 0.4 (0.01) & 0.36 (0.02) \\
\cmidrule{2-11}
& \multirow{3}{*}{Transformer (rano)} 
    & 0.05 & 0.17 (0.02) & \textbf{0.2 (0.02)} & 0.2 (0.02) & 0.18 (0.02) & 0.11 (0.02) & 0.08 (0.01) & 0.12 (0.02) & 0.13 (0.02) \\
& 
    & 0.1 & 0.09 (0.02) & \textbf{0.1 (0.01)} & 0.11 (0.02) & 0.1 (0.02) & 0.06 (0.01) & 0.05 (0.01) & 0.04 (0.01) & 0.06 (0.01) \\
& 
    & 1 & 0.07 (0.02) & \textbf{0.05 (0.02)} & 0.11 (0.03) & 0.07 (0.02) & 0.16 (0.02) & 0.16 (0.03) & 0.11 (0.02) & 0.12 (0.02) \\
\cmidrule{2-11}
& \multirow{3}{*}{Transformer (zale)} 
    & 0.05 & 0.11 (0.02) & \textbf{0.15 (0.03)} & 0.12 (0.02) & 0.16 (0.03) & 0.22 (0.03) & 0.23 (0.03) & 0.21 (0.04) & 0.19 (0.03) \\
& 
    & 0.1 & 0.14 (0.01) & \textbf{0.16 (0.01)} & 0.15 (0.02) & 0.14 (0.01) & 0.1 (0.02) & 0.12 (0.02) & 0.13 (0.02) & 0.14 (0.03) \\
& 
    & 1 & 0.36 (0.02) & \textbf{0.31 (0.03)} & 0.38 (0.02) & 0.36 (0.02) & 0.38 (0.01) & 0.34 (0.02) & 0.31 (0.03) & 0.38 (0.02) \\
\midrule
\multirow{3}{*}{\rotatebox{90}{\textbf{SELFIES (\cite{maus2022local})}}}
& \multirow{1}{*}{Transformer (pdop)} 
    & 1 & 0.48 (0.02) & \textbf{0.48 (0.02)} & 0.53 (0.03) & 0.49 (0.02) & 0.51 (0.02) & 0.51 (0.02) & 0.49 (0.02) & 0.50 (0.01) \\
\cmidrule{2-11}
& \multirow{1}{*}{Transformer (rano)} 
    & 1 & 0.38 (0.01) & \textbf{0.35 (0.01)} & 0.35 (0.01) & 0.32 (0.01) & 0.36 (0.01) & 0.36 (0.01) & 0.37 (0.01) & 0.34 (0.01) \\
\cmidrule{2-11}
& \multirow{1}{*}{Transformer (zale)} 
    & 1 & 0.44 (0.02) & \textbf{0.44 (0.01)} & 0.44 (0.03) & 0.49 (0.02) & 0.47 (0.01) & 0.46 (0.01) & 0.49 (0.01) & 0.47 (0.01) \\
\midrule
\multirow{9}{*}{\rotatebox{90}{\textbf{SMILES}}}
& \multirow{3}{*}{GRU} 
    & 0.05 & 2.47 (0.22) & \textbf{2.47 (0.22)} & 2.42 (0.21) & 2.24 (0.17) & 2.35 (0.23) & 2.42 (0.28) & 1.97 (0.18) & 2.25 (0.31) \\
& 
    & 0.1 & 1.76 (0.3) & \textbf{2.57 (0.31)} & 2.45 (0.26) & 2.07 (0.34) & 2.24 (0.28) & 2.26 (0.33) & 2.24 (0.31) & 1.92 (0.24) \\
& 
    & 1 & 2.43 (0.32) & \textbf{2.48 (0.29)} & 2.76 (0.24) & 2.17 (0.49) & 2.43 (0.32) & 2.35 (0.37) & 1.4 (0.52) & 2.04 (0.33) \\
\cmidrule{2-11}
& \multirow{3}{*}{LSTM} 
    & 0.05 & 2.65 (0.32) & \textbf{2.73 (0.33)} & 2.89 (0.26) & 3.02 (0.34) & 2.64 (0.3) & 2.91 (0.29) & 2.77 (0.23) & 2.68 (0.37) \\
& 
    & 0.1 & 1.97 (0.37) & \textbf{1.78 (0.43)} & 1.98 (0.4) & 2.16 (0.29) & 2.36 (0.41) & 1.87 (0.39) & 2.34 (0.33) & 1.73 (0.41) \\
& 
    & 1 & 3.06 (0.2) & \textbf{2.71 (0.34)} & 2.65 (0.16) & 2.83 (0.25) & 2.75 (0.28) & 3.49 (0.26) & 2.68 (0.31) & 3.16 (0.35) \\
\cmidrule{2-11}
& \multirow{3}{*}{Transformer} 
    & 0.05 & 2.47 (0.23) & \textbf{2.88 (0.24)} & 2.42 (0.17) & 2.67 (0.27) & 2.91 (0.3) & 2.25 (0.2) & 2.43 (0.24) & 2.48 (0.29) \\
& 
    & 0.1 & 3.03 (0.3) & \textbf{2.15 (0.18)} & 2.51 (0.25) & 2.41 (0.24) & 2.28 (0.12) & 2.28 (0.22) & 2.53 (0.26) & 2.46 (0.16) \\
& 
    & 1 & 2.37 (0.21) & \textbf{2.25 (0.18)} & 2.16 (0.17) & 2.21 (0.25) & 2.46 (0.2) & 2.11 (0.12) & 2.31 (0.16) & 2.71 (0.32) \\
\bottomrule
\end{tabular}
\end{table}

\begin{table}[H]
\centering
\renewcommand{\arraystretch}{1.2} 
\caption{Ablation study for the initial length, success tolerance and failure tolerance for the $\turbo$ method. Average across 10 independent runs, of the top 20 best values across datasets, architectures. Column names indicate the length/success/fail values. Results from the main paper are in bold.\\}
\scriptsize
\begin{tabular}{lcccccccccc}
\toprule
 & \textbf{Architecture} & \textbf{$\beta$} & \textbf{0.8/10/10} & \textbf{0.8/10/2} & \textbf{0.8/2/10} & \textbf{0.8/2/2} & \textbf{1.6/10/10} & \textbf{1.6/10/2} & \textbf{1.6/2/10} & \textbf{1.6/2/2} \\
\midrule
\multirow{9}{*}{\rotatebox{90}{\textbf{Expressions}}}
& \multirow{3}{*}{GRU} 
    & 0.05 & -2.03 (0.12) & \textbf{-2.1 (0.11)} & -2.0 (0.09) & -2.0 (0.11) & -1.93 (0.09) & -1.92 (0.08) & -1.91 (0.06) & -1.93 (0.07) \\
& 
    & 0.1 & -2.18 (0.15) & \textbf{-2.14 (0.18)} & -2.11 (0.2) & -2.14 (0.2) & -1.98 (0.13) & -2.05 (0.12) & -2.0 (0.17) & -2.09 (0.22) \\
& 
    & 1 & -1.39 (0.12) & \textbf{-1.34 (0.05)} & -1.56 (0.11) & -1.61 (0.11) & -1.36 (0.08) & -1.42 (0.11) & -1.45 (0.12) & -1.38 (0.12) \\
\cmidrule{2-11}
& \multirow{3}{*}{LSTM} 
    & 0.05 & -1.45 (0.09) & \textbf{-1.59 (0.13)} & -1.63 (0.12) & -1.49 (0.1) & -1.46 (0.08) & -1.52 (0.08) & -1.51 (0.12) & -1.31 (0.07) \\
& 
    & 0.1 & -1.31 (0.12) & \textbf{-1.32 (0.13)} & -1.37 (0.11) & -1.31 (0.11) & -1.31 (0.1) & -1.36 (0.13) & -1.36 (0.11) & -1.35 (0.09) \\
& 
    & 1 & -2.16 (0.06) & \textbf{-2.22 (0.08)} & -2.23 (0.08) & -2.23 (0.08) & -2.13 (0.07) & -2.16 (0.06) & -2.15 (0.05) & -2.08 (0.06) \\
\cmidrule{2-11}
& \multirow{3}{*}{Transformer} 
    & 0.05 & -2.07 (0.12) & \textbf{-2.16 (0.09)} & -2.03 (0.17) & -2.15 (0.14) & -1.96 (0.13) & -1.74 (0.13) & -2.09 (0.1) & -1.79 (0.11) \\
& 
    & 0.1 & -1.45 (0.13) & \textbf{-1.39 (0.12)} & -1.56 (0.13) & -1.38 (0.13) & -1.32 (0.11) & -1.43 (0.17) & -1.4 (0.1) & -1.33 (0.12) \\
& 
    & 1 & -1.96 (0.11) & \textbf{-1.77 (0.12)} & -1.81 (0.11) & -1.85 (0.09) & -1.85 (0.11) & -1.76 (0.08) & -1.78 (0.08) & -1.75 (0.09) \\
\midrule
\multirow{9}{*}{\rotatebox{90}{\textbf{SELFIES}}}
& \multirow{3}{*}{Transformer (pdop)} 
    & 0.05 & 0.02 (0.01) & \textbf{0.04 (0.01)} & 0.04 (0.01) & 0.04 (0.01) & 0.09 (0.01) & 0.09 (0.03) & 0.08 (0.01) & 0.09 (0.02) \\
& 
    & 0.1 & 0.01 (0.0) & \textbf{0.01 (0.0)} & 0.02 (0.01) & 0.02 (0.01) & 0.03 (0.01) & 0.08 (0.03) & 0.04 (0.02) & 0.05 (0.02) \\
& 
    & 1 & 0.24 (0.01) & \textbf{0.23 (0.02)} & 0.21 (0.02) & 0.24 (0.01) & 0.27 (0.01) & 0.26 (0.01) & 0.28 (0.01) & 0.26 (0.01) \\
\cmidrule{2-11}
& \multirow{3}{*}{Transformer (rano)} 
    & 0.05 & 0.06 (0.01) & \textbf{0.06 (0.0)} & 0.06 (0.01) & 0.06 (0.01) & 0.04 (0.01) & 0.02 (0.0) & 0.04 (0.0) & 0.03 (0.01) \\
& 
    & 0.1 & 0.03 (0.01) & \textbf{0.03 (0.01)} & 0.03 (0.0) & 0.02 (0.0) & -- & -- & -- & -- \\
& 
    & 1 & -- & \textbf{--} & 0.08 (0.0) & -- & 0.05 (0.01) & 0.07 (0.01) & 0.07 (0.0) & -- \\
\cmidrule{2-11}
& \multirow{3}{*}{Transformer (zale)} 
    & 0.05 & 0.02 (0.01) & \textbf{0.04 (0.01)} & 0.02 (0.01) & 0.04 (0.01) & 0.07 (0.01) & 0.07 (0.02) & 0.06 (0.01) & 0.06 (0.01) \\
& 
    & 0.1 & 0.04 (0.01) & \textbf{0.04 (0.01)} & 0.04 (0.01) & 0.03 (0.01) & 0.01 (0.0) & 0.02 (0.0) & 0.09 (0.0) & 0.04 (0.0) \\
& 
    & 1 & 0.16 (0.02) & \textbf{0.17 (0.02)} & 0.16 (0.02) & 0.17 (0.02) & 0.18 (0.02) & 0.16 (0.02) & 0.18 (0.02) & 0.19 (0.02) \\
\midrule
\multirow{3}{*}{\rotatebox{90}{\textbf{SELFIES (\cite{maus2022local})}}}
& \multirow{1}{*}{Transformer (pdop)} 
    & 1 & 0.44 (0.01) & \textbf{0.44 (0.01)} & 0.44 (0.01) & 0.44 (0.01) & 0.45 (0.0) & 0.45 (0.01) & 0.44 (0.01) & 0.44 (0.01) \\
\cmidrule{2-11}
& \multirow{1}{*}{Transformer (rano)} 
    & 1 & 0.20 (0.02) & \textbf{0.22 (0.02)} & 0.21 (0.01) & 0.20 (0.01) & 0.23 (0.01) & 0.22 (0.01) & 0.22 (0.01) & 0.23 (0.01) \\
\cmidrule{2-11}
& \multirow{1}{*}{Transformer (zale)} 
    & 1 & 0.37 (0.01) & \textbf{0.37 (0.01)} & 0.36 (0.01) & 0.37 (0.01) & 0.39 (0.01) & 0.38 (0.01) & 0.39 (0.01) & 0.39 (0.01) \\
\midrule
\multirow{9}{*}{\rotatebox{90}{\textbf{SMILES}}}
& \multirow{3}{*}{GRU} 
    & 0.05 & 0.89 (0.19) & \textbf{0.82 (0.15)} & 0.64 (0.2) & 0.89 (0.14) & 1.0 (0.14) & 0.98 (0.19) & 0.7 (0.24) & 1.07 (0.19) \\
& 
    & 0.1 & 0.4 (0.22) & \textbf{0.41 (0.19)} & 0.63 (0.26) & 0.55 (0.14) & 0.83 (0.13) & 0.96 (0.33) & 0.69 (0.18) & 0.75 (0.18) \\
& 
    & 1 & 1.44 (0.0) & \textbf{-0.16 (0.0)} & -- & -- & 0.14 (0.65) & -- & -- & 1.14 (0.0) \\
\cmidrule{2-11}
& \multirow{3}{*}{LSTM} 
    & 0.05 & 1.18 (0.25) & \textbf{1.54 (0.41)} & 1.1 (0.15) & 1.1 (0.36) & 0.86 (0.27) & 1.05 (0.21) & 1.07 (0.29) & 1.22 (0.27) \\
& 
    & 0.1 & -- & \textbf{--} & -- & -- & 0.21 (0.0) & -- & -- & -- \\
& 
    & 1 & 0.61 (0.16) & \textbf{0.52 (0.26)} & 0.48 (0.13) & 0.7 (0.23) & 0.67 (0.23) & 1.13 (0.16) & 0.97 (0.28) & 0.88 (0.23) \\
\cmidrule{2-11}
& \multirow{3}{*}{Transformer} 
    & 0.05 & 0.88 (0.18) & \textbf{0.97 (0.19)} & 0.92 (0.14) & 0.67 (0.15) & 0.93 (0.17) & 1.16 (0.1) & 1.09 (0.11) & 0.97 (0.24) \\
& 
    & 0.1 & 0.42 (0.12) & \textbf{0.62 (0.14)} & 0.44 (0.16) & 0.73 (0.14) & 0.91 (0.1) & 0.6 (0.14) & 0.81 (0.14) & 0.7 (0.14) \\
& 
    & 1 & 0.72 (0.18) & \textbf{0.48 (0.18)} & 0.61 (0.15) & 0.47 (0.16) & 0.99 (0.12) & 0.74 (0.15) & 0.83 (0.16) & 0.68 (0.15) \\
\bottomrule
\end{tabular}
\end{table}

\begin{table}[H]
\centering
\renewcommand{\arraystretch}{1.2} 
\caption{Ablation study for the facet length parameter of $\bound$ method. Average across 10 independent runs of the average top 20 values across datasets, architectures, and bound methods are displayed for facet lengths of size 1, 5 and 10. Results from the main paper are bold.\\}
\small
\begin{tabular}{lccccc}
\toprule
 & \textbf{Architecture} & \textbf{$\beta$} &\textbf{1} & \textbf{5} & \textbf{10} \\
\midrule
\multirow{9}{*}{\rotatebox{90}{\textbf{Expressions}}}
& \multirow{3}{*}{GRU} 
    & 0.05 & -1.79 (0.08) & \textbf{-1.72 (0.07)} & -1.72 (0.07) \\
& 
    & 0.1 & -1.73 (0.11) & \textbf{-1.93 (0.09)} & -1.89 (0.11) \\
& 
    & 1 & -1.94 (0.07) & \textbf{-1.97 (0.07)} & -2.03 (0.09) \\
\cmidrule{2-6}
& \multirow{3}{*}{LSTM} 
    & 0.05 & -1.89 (0.09) & \textbf{-1.78 (0.09)} & -1.93 (0.06) \\
& 
    & 0.1 & -1.29 (0.06) & \textbf{-1.39 (0.08)} & -1.37 (0.06) \\
& 
    & 1 & -2.04 (0.05) & \textbf{-2.04 (0.04)} & -2.04 (0.05) \\
\cmidrule{2-6}
& \multirow{3}{*}{Transformer} 
    & 0.05 & -3.11 (0.14) & \textbf{-2.93 (0.13)} & -3.02 (0.09) \\
& 
    & 0.1 & -3.19 (0.28) & \textbf{-2.69 (0.25)} & -2.93 (0.22) \\
& 
    & 1 & -2.44 (0.11) & \textbf{-2.41 (0.09)} & -2.28 (0.11) \\
\midrule
\multirow{9}{*}{\rotatebox{90}{\textbf{SELFIES}}}
& \multirow{3}{*}{Transformer (pdop)} 
    & 0.05 & 0.22 (0.01) & \textbf{0.21 (0.01)} & 0.25 (0.01) \\
& 
    & 0.1 & 0.25 (0.01) & \textbf{0.19 (0.01)} & 0.22 (0.03) \\
& 
    & 1 & 0.26 (0.01) & \textbf{0.25 (0.01)} & 0.27 (0.01) \\
\cmidrule{2-6}
& \multirow{3}{*}{Transformer (rano)} 
    & 0.05 & 0.04 (0.0) & \textbf{0.03 (0.0)} & -- \\
& 
    & 0.1 & -- & \textbf{--} & -- \\
& 
    & 1 & 0.09 (0.02) & \textbf{0.07 (0.0)} & 0.08 (0.01) \\
\cmidrule{2-6}
& \multirow{3}{*}{Transformer (zale)} 
    & 0.05 & 0.12 (0.02) & \textbf{0.13 (0.01)} & 0.11 (0.01) \\
& 
    & 0.1 & 0.08 (0.01) & \textbf{0.06 (0.0)} & 0.07 (0.0) \\
& 
    & 1 & 0.18 (0.02) & \textbf{0.18 (0.02)} & 0.19 (0.02) \\
\midrule
\multirow{3}{*}{\rotatebox{90}{\textbf{SELFIES (\cite{maus2022local})}}}
& \multirow{1}{*}{Transformer (pdop)} 
    & 1 & 0.45 (0.00) & \textbf{0.42 (0.00)} & 0.43 (0.01) \\
\cmidrule{2-6}
& \multirow{1}{*}{Transformer (rano)} 
    & 1 & 0.22 (0.01) & \textbf{0.17 (0.01)} & 0.19 (0.01) \\
\cmidrule{2-6}
& \multirow{1}{*}{Transformer (zale)} 
    & 1 & 0.4 (0.01) & \textbf{0.37 (0.01)} & 0.38 (0.01) \\
\midrule
\multirow{9}{*}{\rotatebox{90}{\textbf{SMILES}}}
& \multirow{3}{*}{GRU} 
    & 0.05 & 0.65 (0.12) & \textbf{0.52 (0.12)} & 0.52 (0.13) \\
& 
    & 0.1 & 0.52 (0.14) & \textbf{0.12 (0.14)} & 0.36 (0.15) \\
& 
    & 1 & -- & \textbf{--} & -- \\
\cmidrule{2-6}
& \multirow{3}{*}{LSTM} 
    & 0.05 & 0.67 (0.15) & \textbf{0.66 (0.12)} & 0.56 (0.2) \\
& 
    & 0.1 & -- & \textbf{--} & -- \\
& 
    & 1 & 0.49 (0.16) & \textbf{0.7 (0.2)} & 0.55 (0.15) \\
\cmidrule{2-6}
& \multirow{3}{*}{Transformer} 
    & 0.05 & 0.52 (0.18) & \textbf{0.42 (0.15)} & 0.55 (0.12) \\
& 
    & 0.1 & 0.63 (0.18) & \textbf{0.61 (0.15)} & 0.56 (0.17) \\
& 
    & 1 & 0.16 (0.15) & \textbf{0.23 (0.12)} & 0.39 (0.13) \\
\bottomrule
\end{tabular}
\end{table}

\begin{table}[H]
\centering
\renewcommand{\arraystretch}{1.2} 
\caption{Ablation study for the facet length parameter of $\bound$ method. Average across 10 independent runs, of the best value across datasets, architectures, and bound methods are displayed for facet lengths of size 1, 5 and 10. Results from the main paper are bold.\\}
\small
\begin{tabular}{lccccc}
\toprule
 & \textbf{Architecture} & \textbf{$\beta$} &\textbf{1} & \textbf{5} & \textbf{10} \\
\midrule
\multirow{9}{*}{\rotatebox{90}{\textbf{Expressions}}}
& \multirow{3}{*}{GRU} 
    & 0.05 & -0.57 (0.07) & \textbf{-0.56 (0.04)} & -0.56 (0.07) \\
& 
    & 0.1 & -0.53 (0.04) & \textbf{-0.46 (0.02)} & -0.46 (0.02) \\
& 
    & 1 & -0.68 (0.05) & \textbf{-0.77 (0.02)} & -0.73 (0.04) \\
\cmidrule{2-6}
& \multirow{3}{*}{LSTM} 
    & 0.05 & -0.57 (0.11) & \textbf{-0.56 (0.04)} & -0.67 (0.09) \\
& 
    & 0.1 & -0.4 (0.05) & \textbf{-0.44 (0.04)} & -0.47 (0.04) \\
& 
    & 1 & -1.01 (0.1) & \textbf{-1.02 (0.07)} & -0.86 (0.0) \\
\cmidrule{2-6}
& \multirow{3}{*}{Transformer} 
    & 0.05 & -1.06 (0.13) & \textbf{-0.8 (0.1)} & -1.11 (0.18) \\
& 
    & 0.1 & -0.8 (0.14) & \textbf{-0.65 (0.1)} & -0.69 (0.09) \\
& 
    & 1 & -0.85 (0.1) & \textbf{-0.82 (0.1)} & -0.78 (0.12) \\
\midrule
\multirow{9}{*}{\rotatebox{90}{\textbf{SELFIES}}}
& \multirow{3}{*}{Transformer (pdop)} 
    & 0.05 & 0.36 (0.01) & \textbf{0.36 (0.02)} & 0.34 (0.03) \\
& 
    & 0.1 & 0.29 (0.03) & \textbf{0.34 (0.02)} & 0.27 (0.03) \\
& 
    & 1 & 0.39 (0.01) & \textbf{0.36 (0.02)} & 0.35 (0.03) \\
\cmidrule{2-6}
& \multirow{3}{*}{Transformer (rano)} 
    & 0.05 & 0.06 (0.01) & \textbf{0.07 (0.01)} & 0.08 (0.01) \\
& 
    & 0.1 & 0.08 (0.02) & \textbf{0.08 (0.02)} & 0.07 (0.01) \\
& 
    & 1 & 0.16 (0.02) & \textbf{0.16 (0.02)} & 0.16 (0.02) \\
\cmidrule{2-6}
& \multirow{3}{*}{Transformer (zale)} 
    & 0.05 & 0.27 (0.03) & \textbf{0.27 (0.03)} & 0.26 (0.03) \\
& 
    & 0.1 & 0.15 (0.04) & \textbf{0.19 (0.03)} & 0.18 (0.04) \\
& 
    & 1 & 0.38 (0.01) & \textbf{0.38 (0.02)} & 0.39 (0.01) \\
\midrule
\multirow{3}{*}{\rotatebox{90}{\textbf{SELFIES \cite{maus2022local}}}}
& \multirow{1}{*}{Transformer (pdop)} 
    & 1 & 0.51 (0.02) & \textbf{0.47 (0.01)} & 0.48 (0.01) \\
\cmidrule{2-6}
& \multirow{1}{*}{Transformer (rano)} 
    & 1 & 0.34 (0.02) & \textbf{0.37 (0.01)} & 0.34 (0.02) \\
\cmidrule{2-6}
& \multirow{1}{*}{Transformer (zale)} 
    & 1 & 0.47 (0.0) & \textbf{0.44 (0.01)} & 0.45 (0.02) \\
\midrule
\multirow{9}{*}{\rotatebox{90}{\textbf{SMILES}}}
& \multirow{3}{*}{GRU} 
    & 0.05 & 2.06 (0.3) & \textbf{1.71 (0.24)} & 2.02 (0.22) \\
& 
    & 0.1 & 2.11 (0.28) & \textbf{1.74 (0.2)} & 1.94 (0.22) \\
& 
    & 1 & 2.4 (0.31) & \textbf{2.1 (0.32)} & 2.34 (0.31) \\
\cmidrule{2-6}
& \multirow{3}{*}{LSTM} 
    & 0.05 & 2.31 (0.21) & \textbf{2.32 (0.19)} & 2.15 (0.18) \\
& 
    & 0.1 & 1.74 (0.5) & \textbf{1.85 (0.34)} & 1.47 (0.49) \\
& 
    & 1 & 2.8 (0.25) & \textbf{3.09 (0.16)} & 2.42 (0.3) \\
\cmidrule{2-6}
& \multirow{3}{*}{Transformer} 
    & 0.05 & 1.82 (0.2) & \textbf{1.79 (0.13)} & 1.74 (0.17) \\
& 
    & 0.1 & 2.19 (0.16) & \textbf{2.24 (0.1)} & 2.09 (0.17) \\
& 
    & 1 & 1.72 (0.18) & \textbf{2.0 (0.14)} & 1.96 (0.14) \\
\bottomrule
\end{tabular}
\end{table}

\section{Additional experimental results}\label{sec:more_exprs}
\begin{table}[H]
\centering
\renewcommand{\arraystretch}{1.1} 
\caption{Average of the top solution found during LSO (higher is better), across datasets and decoder architectures. We {\bf bold} the best method and \underline{underline} the second-best. The average ranking for each method (lower is better) is provided, along with the number of times each method is within one standard deviation of the best. $\score$ and $\prior$ achieve the highest value most frequently (14 out of 30) and outperforms other methods both in terms of the average ranking and the frequency of being within one standard deviation of the best result.\\}\label{tab:top1}
\scriptsize
\begin{tabular}{lccccccccc}
\toprule
 & \textbf{Architecture} & \textbf{$\beta$} &\textbf{\score} & \textbf{\bound} & \textbf{\uc} & \textbf{\noreg} & \textbf{\prior} & \textbf{\turbo} & \textbf{\likelihood} \\
\midrule
\multirow{9}{*}{\rotatebox{90}{\textbf{Expressions}}}
& \multirow{3}{*}{GRU} 
    & 0.05 & -0.55 (0.04) & -0.56 (0.04) & -0.59 (0.04) & -0.45 (0.05) & \textbf{-0.4 (0.07)} & -0.73 (0.12) & \textbf{-0.4 (0.07)} \\
& 
    & 0.1 & -0.45 (0.03) & -0.46 (0.02) & -0.47 (0.05) & \underline{-0.43 (0.02)} & \textbf{-0.37 (0.03)} & -0.61 (0.07) & \underline{-0.43 (0.02)} \\
& 
    & 1 & -0.47 (0.03) & -0.77 (0.02) & -0.51 (0.04) & \underline{-0.46 (0.02)} & -0.47 (0.03) & -0.54 (0.05) & \textbf{-0.43 (0.01)} \\
\cmidrule{2-10}
& \multirow{3}{*}{LSTM} 
    & 0.05 & -0.43 (0.02) & -0.56 (0.04) & -0.52 (0.05) & -0.43 (0.01) & \underline{-0.41 (0.01)} & -0.43 (0.02) & \textbf{-0.4 (0.01)} \\
& 
    & 0.1 & \textbf{-0.32 (0.05)} & -0.44 (0.04) & -0.39 (0.04) & -0.38 (0.02) & -0.4 (0.01) & -0.39 (0.0) & \textbf{-0.32 (0.04)} \\
& 
    & 1 & \textbf{-0.86 (0.0)} & -1.02 (0.07) & \textbf{-0.86 (0.0)} & \textbf{-0.86 (0.0)} & \textbf{-0.86 (0.0)} & \textbf{-0.86 (0.0)} & -0.91 (0.04) \\
\cmidrule{2-10}
& \multirow{3}{*}{Transformer} 
    & 0.05 & -0.43 (0.03) & -0.8 (0.1) & -0.57 (0.05) & -0.44 (0.02) & \textbf{-0.37 (0.05)} & -0.44 (0.02) & \underline{-0.38 (0.04)} \\
& 
    & 0.1 & \underline{-0.36 (0.03)} & -0.65 (0.1) & -0.55 (0.04) & -0.39 (0.01) & \textbf{-0.35 (0.04)} & -0.41 (0.02) & -0.41 (0.02) \\
& 
    & 1 & \textbf{-0.52 (0.05)} & -0.82 (0.1) & \underline{-0.58 (0.05)} & \underline{-0.58 (0.04)} & -0.62 (0.09) & \underline{-0.58 (0.1)} & -0.65 (0.08) \\
\midrule
\multirow{9}{*}{\rotatebox{90}{\textbf{SELFIES}}}
& \multirow{3}{*}{Transformer (pdop)} 
    & 0.05 & \textbf{0.43 (0.0)} & 0.36 (0.02) & 0.42 (0.0) & 0.42 (0.0) & 0.42 (0.0) & 0.15 (0.03) & \textbf{0.43 (0.0)} \\
& 
    & 0.1 & \textbf{0.43 (0.0)} & 0.34 (0.02) & 0.42 (0.01) & 0.42 (0.0) & 0.41 (0.01) & 0.09 (0.03) & \textbf{0.43 (0.01)} \\
& 
    & 1 & \textbf{0.41 (0.01)} & 0.36 (0.02) & 0.39 (0.01) & \underline{0.4 (0.0)} & 0.38 (0.01) & 0.31 (0.03) & \underline{0.4 (0.0)} \\
\cmidrule{2-10}
& \multirow{3}{*}{Transformer (rano)} 
    & 0.05 & \underline{0.33 (0.01)} & 0.07 (0.01) & \textbf{0.36 (0.01)} & \underline{0.33 (0.01)} & 0.32 (0.01) & 0.2 (0.02) & 0.31 (0.01) \\
& 
    & 0.1 & \underline{0.33 (0.01)} & 0.08 (0.02) & \textbf{0.36 (0.02)} & 0.32 (0.01) & 0.32 (0.01) & 0.1 (0.01) & \underline{0.33 (0.01)} \\
& 
    & 1 & 0.31 (0.01) & 0.16 (0.02) & \textbf{0.39 (0.02)} & 0.31 (0.01) & \underline{0.33 (0.02)} & 0.05 (0.02) & \underline{0.33 (0.01)} \\
\cmidrule{2-10}
& \multirow{3}{*}{Transformer (zale)} 
    & 0.05 & \underline{0.43 (0.01)} & 0.27 (0.03) & 0.42 (0.01) & \underline{0.43 (0.01)} & 0.42 (0.0) & 0.15 (0.03) & \textbf{0.44 (0.01)} \\
& 
    & 0.1 & \textbf{0.44 (0.01)} & 0.19 (0.03) & 0.42 (0.01) & 0.42 (0.01) & 0.42 (0.01) & 0.16 (0.01) & \textbf{0.44 (0.01)} \\
& 
    & 1 & \textbf{0.42 (0.01)} & 0.38 (0.02) & 0.39 (0.01) & \textbf{0.42 (0.01)} & 0.37 (0.01) & 0.31 (0.03) & \textbf{0.42 (0.01)} \\
\midrule
\multirow{3}{*}{\rotatebox{90}{\textbf{SELFIES (\cite{maus2022local})}}}
& \multirow{1}{*}{Transformer (pdop)} 
    & 1 & \textbf{0.54 (0.01)} & 0.46 (0.01) & \underline{0.50 (0.01)} & 0.49 (0.01) & 0.36 (0.02) & 0.48 (0.02) & 0.42 (0.00) \\
\cmidrule{2-10}
& \multirow{1}{*}{Transformer (rano)} 
    & 1 & \textbf{0.37 (0.00)} & \textbf{0.37 (0.01)} & \underline{0.36 (0.01)} & \textbf{0.37 (0.01)} & 0.34 (0.01) & 0.35 (0.02) & 0.34 (0.01) \\
\cmidrule{2-10}
& \multirow{1}{*}{Transformer (zale)} 
    & 1 & \textbf{0.47 (0.01)} & 0.43 (0.01) & \underline{0.44 (0.01)} & \textbf{0.47 (0.01)} & 0.39 (0.01) & 0.44 (0.01) & 0.42 (0.01) \\
\midrule
\multirow{9}{*}{\rotatebox{90}{\textbf{SMILES}}}
& \multirow{3}{*}{GRU} 
    & 0.05 & \textbf{3.29 (0.1)} & 1.71 (0.24) & 3.13 (0.07) & \underline{3.26 (0.11)} & 3.18 (0.06) & 2.47 (0.22) & \underline{3.26 (0.08)} \\
& 
    & 0.1 & \textbf{3.55 (0.14)} & 1.74 (0.2) & 3.2 (0.1) & 3.31 (0.16) & 3.15 (0.11) & 2.57 (0.31) & \underline{3.33 (0.12)} \\
& 
    & 1 & 3.85 (0.17) & 2.1 (0.32) & 2.24 (0.28) & 3.66 (0.16) & \textbf{3.89 (0.28)} & 2.48 (0.29) & \textbf{3.89 (0.2)} \\
\cmidrule{2-10}
& \multirow{3}{*}{LSTM} 
    & 0.05 & 3.29 (0.07) & 2.32 (0.19) & 3.12 (0.08) & \underline{3.3 (0.1)} & 3.28 (0.1) & 2.73 (0.33) & \textbf{3.37 (0.09)} \\
& 
    & 0.1 & \textbf{3.66 (0.2)} & 1.85 (0.34) & 2.65 (0.19) & 3.52 (0.22) & \underline{3.57 (0.18)} & 1.78 (0.43) & 3.54 (0.16) \\
& 
    & 1 & \textbf{3.6 (0.14)} & 3.09 (0.16) & 2.6 (0.3) & 3.18 (0.17) & 3.28 (0.17) & 2.71 (0.34) & \underline{3.48 (0.11)} \\
\cmidrule{2-10}
& \multirow{3}{*}{Transformer} 
    & 0.05 & \textbf{3.21 (0.08)} & 1.79 (0.13) & 3.1 (0.07) & 3.14 (0.04) & 3.14 (0.04) & 2.88 (0.24) & \underline{3.19 (0.08)} \\
& 
    & 0.1 & \underline{3.23 (0.04)} & 2.24 (0.1) & \textbf{3.28 (0.08)} & 3.11 (0.05) & 3.09 (0.06) & 2.15 (0.18) & 3.16 (0.06) \\
& 
    & 1 & \textbf{3.2 (0.07)} & 2.0 (0.14) & 2.8 (0.13) & 3.13 (0.07) & 3.11 (0.1) & 2.25 (0.18) & \textbf{3.2 (0.06)} \\
\bottomrule
& \textbf{Average rank}  & & \textbf{2.15} &6.17 & 4.17 & 3.15 & 3.86 & 5.7 & \underline{2.76} \\
& \textbf{\# within 1 std of best}  & & \textbf{19} & 1 & 5 & 8  & 10 & 2 & \underline{16} \\
\bottomrule
\end{tabular}
\end{table}

\begin{table}[htbp]
\centering
\renewcommand{\arraystretch}{1.1} 
\caption{Average of the top 20 solutions found during LSO (higher is better) across datasets and decoder architectures. We {\bf bold} the best method and \underline{underline} the second-best. The average ranking for each method (lower is better) is provided, along with the number of times each method is within one standard deviation of the best. $\score$ achieves the highest value in most experiments (20 out of 30) and outperforms other methods in terms of both the average ranking and the frequency of being within one standard deviation of the best result.\\}\label{tab:top20}
\scriptsize
\begin{tabular}{lccccccccc}
\toprule
 & \textbf{Architecture} & \textbf{$\beta$} &\textbf{\score} & \textbf{\bound} & \textbf{\uc} & \textbf{\noreg} & \textbf{\prior} & \textbf{\turbo} & \textbf{\likelihood} \\
\midrule
\multirow{9}{*}{\rotatebox{90}{\textbf{Expressions}}}
& \multirow{3}{*}{GRU} 
    & 0.05 & -1.43 (0.05) & -1.72 (0.07) & -1.5 (0.05) & \underline{-1.25 (0.06)} & -1.28 (0.05) & -2.1 (0.11) & \textbf{-1.15 (0.09)} \\
& 
    & 0.1 & -1.34 (0.08) & -1.93 (0.09) & -2.01 (0.12) & \underline{-1.21 (0.09)} & \textbf{-1.18 (0.11)} & -2.14 (0.18) & -1.31 (0.08) \\
& 
    & 1 & \textbf{-0.84 (0.02)} & -1.97 (0.07) & -0.91 (0.03) & \textbf{-0.84 (0.02)} & -0.89 (0.05) & -1.34 (0.05) & -0.88 (0.03) \\
\cmidrule{2-10}
& \multirow{3}{*}{LSTM} 
    & 0.05 & -1.06 (0.06) & -1.78 (0.09) & -1.53 (0.06) & \textbf{-0.98 (0.05)} & -1.02 (0.03) & -1.59 (0.13) & \underline{-1.0 (0.07)} \\
& 
    & 0.1 & -0.8 (0.03) & -1.39 (0.08) & -1.09 (0.04) & \underline{-0.79 (0.03)} & \underline{-0.79 (0.03)} & -1.32 (0.13) & \textbf{-0.75 (0.03)} \\
& 
    & 1 & \textbf{-1.79 (0.02)} & -2.04 (0.04) & -2.02 (0.03) & \underline{-1.81 (0.02)} & -1.83 (0.02) & -2.22 (0.08) & \underline{-1.81 (0.03)} \\
\cmidrule{2-10}
& \multirow{3}{*}{Transformer} 
    & 0.05 & -1.0 (0.04) & -2.93 (0.13) & -1.64 (0.08) & -1.03 (0.04) & \underline{-0.99 (0.05)} & -2.16 (0.09) & \textbf{-0.93 (0.05)} \\
& 
    & 0.1 & \textbf{-0.77 (0.02)} & -2.69 (0.25) & -1.79 (0.08) & \textbf{-0.77 (0.04)} & -0.78 (0.03) & -1.39 (0.12) & -0.81 (0.03) \\
& 
    & 1 & \textbf{-1.36 (0.09)} & -2.41 (0.09) & -1.52 (0.09) & -1.5 (0.07) & \underline{-1.41 (0.08)} & -1.77 (0.12) & -1.45 (0.12) \\
\midrule
\multirow{9}{*}{\rotatebox{90}{\textbf{SELFIES}}}
& \multirow{3}{*}{Transformer (pdop)} 
    & 0.05 & \textbf{0.37 (0.0)} & 0.21 (0.01) & 0.36 (0.0) & \textbf{0.37 (0.0)} & \textbf{0.37 (0.0)} & 0.04 (0.01) & \textbf{0.37 (0.0)} \\
& 
    & 0.1 & \textbf{0.36 (0.0)} & 0.19 (0.01) & 0.35 (0.0) & \textbf{0.36 (0.0)} & 0.34 (0.0) & 0.01 (0.0) & \textbf{0.36 (0.0)} \\
& 
    & 1 & \textbf{0.35 (0.0)} & 0.25 (0.01) & 0.31 (0.01) & 0.33 (0.0) & 0.28 (0.01) & 0.23 (0.02) & \underline{0.34 (0.0)} \\
\cmidrule{2-10}
& \multirow{3}{*}{Transformer (rano)} 
    & 0.05 & \underline{0.21 (0.0)} & 0.03 (0.0) & \textbf{0.22 (0.0)} & \underline{0.21 (0.0)} & \underline{0.21 (0.0)} & 0.06 (0.0) & \underline{0.21 (0.0)} \\
& 
    & 0.1 & 0.21 (0.0) & -- & \textbf{0.23 (0.01)} & 0.21 (0.0) & \underline{0.22 (0.0)} & 0.03 (0.01) & \underline{0.22 (0.0)} \\
& 
    & 1 & 0.21 (0.0) & 0.07 (0.0) & \textbf{0.22 (0.01)} & 0.19 (0.0) & 0.2 (0.0) & -- & 0.21 (0.0) \\
\cmidrule{2-10}
& \multirow{3}{*}{Transformer (zale)} 
    & 0.05 & \textbf{0.33 (0.0)} & 0.13 (0.01) & 0.32 (0.0) & \textbf{0.33 (0.0)} & 0.32 (0.0) & 0.04 (0.01) & \textbf{0.33 (0.0)} \\
& 
    & 0.1 & \textbf{0.34 (0.0)} & 0.06 (0.0) & 0.31 (0.01) & 0.32 (0.01) & 0.31 (0.0) & 0.04 (0.01) & \textbf{0.34 (0.0)} \\
& 
    & 1 & \textbf{0.31 (0.0)} & 0.18 (0.02) & 0.26 (0.01) & 0.29 (0.01) & 0.23 (0.01) & 0.17 (0.02) & \underline{0.3 (0.01)} \\
\midrule
\multirow{3}{*}{\rotatebox{90}{\textbf{SELFIES (\cite{maus2022local})}}}
& \multirow{1}{*}{Transformer (pdop)} 
    & 1 & \textbf{0.48 (0.01)} & 0.42 (0.01) &\underline{0.44 (0.00)} &\underline{0.44 (0.00)} & 0.3 (0.01) & \underline{0.44 (0.01)} & 0.35 (0.01) \\
\cmidrule{2-10}
& \multirow{1}{*}{Transformer (rano)} 
    & 1 & \textbf{0.29 (0.00)} & 0.17 (0.03) & 0.27 (0.01) & 0.27 (0.01) & \textbf{0.25 (0.01)} & 0.22 (0.03) & 0.22 (0.01) \\
\cmidrule{2-10}
& \multirow{1}{*}{Transformer (zale)} 
    & 1 & \textbf{0.4 (0.00)} & 0.37 (0.01) & 0.38 (0.01) & \underline{0.39 (0.0)} & 0.27 (0.01) & 0.37 (0.01) & 0.31 (0.01) \\
\midrule
\multirow{9}{*}{\rotatebox{90}{\textbf{SMILES}}}
& \multirow{3}{*}{GRU} 
    & 0.05 & \textbf{2.31 (0.04)} & 0.52 (0.12) & 2.18 (0.05) & 2.21 (0.05) & 2.2 (0.04) & 0.82 (0.15) & \textbf{2.31 (0.03)} \\
& 
    & 0.1 & \textbf{2.3 (0.05)} & 0.12 (0.14) & 1.68 (0.04) & 2.09 (0.07) & 1.93 (0.06) & 0.41 (0.19) & \underline{2.12 (0.07)} \\
& 
    & 1 & \textbf{1.64 (0.14)} & -- & -- & \underline{1.19 (0.2)} & 0.58 (0.28) & -0.16 (0.0) & 1.49 (0.14) \\
\cmidrule{2-10}
& \multirow{3}{*}{LSTM} 
    & 0.05 & \textbf{2.33 (0.03)} & 0.66 (0.12) & 2.02 (0.03) & 2.22 (0.05) & 2.13 (0.05) & 1.54 (0.41) & \underline{2.28 (0.03)} \\
& 
    & 0.1 & \textbf{1.57 (0.1)} & -- & -- & 0.8 (0.12) & 0.9 (0.09) & -- & 1.43 (0.12) \\
& 
    & 1 & \textbf{1.94 (0.1)} & 0.7 (0.2) & 0.95 (0.24) & 1.14 (0.21) & 0.81 (0.26) & 0.52 (0.26) & \underline{1.67 (0.13)} \\
\cmidrule{2-10}
& \multirow{3}{*}{Transformer} 
    & 0.05 & \textbf{2.26 (0.04)} & 0.42 (0.15) & 2.04 (0.05) & 2.22 (0.03) & 2.24 (0.03) & 0.97 (0.19) & \underline{2.25 (0.03)} \\
& 
    & 0.1 & \textbf{2.26 (0.03)} & 0.61 (0.15) & 2.08 (0.04) & 2.21 (0.03) & 2.17 (0.03) & 0.62 (0.14) & \underline{2.23 (0.02)} \\
& 
    & 1 & \textbf{2.17 (0.05)} & 0.23 (0.12) & 1.32 (0.09) & 1.98 (0.06) & 1.82 (0.06) & 0.48 (0.18) & \underline{2.16 (0.05)} \\
\bottomrule
& \textbf{Average rank}  & &   \textbf{1.9} & 6.4 & 4.3 & 2.87 & 3.8 & 6.1 & \underline{2.53} \\
& \textbf{\# within 1 std of best}  & & \textbf{22} & 0 & 3 & 7  & 5 & 0 & \underline{13} \\
\bottomrule
\end{tabular}
\end{table}

\begin{figure}
    \centering
    \includegraphics[width=0.6\linewidth]{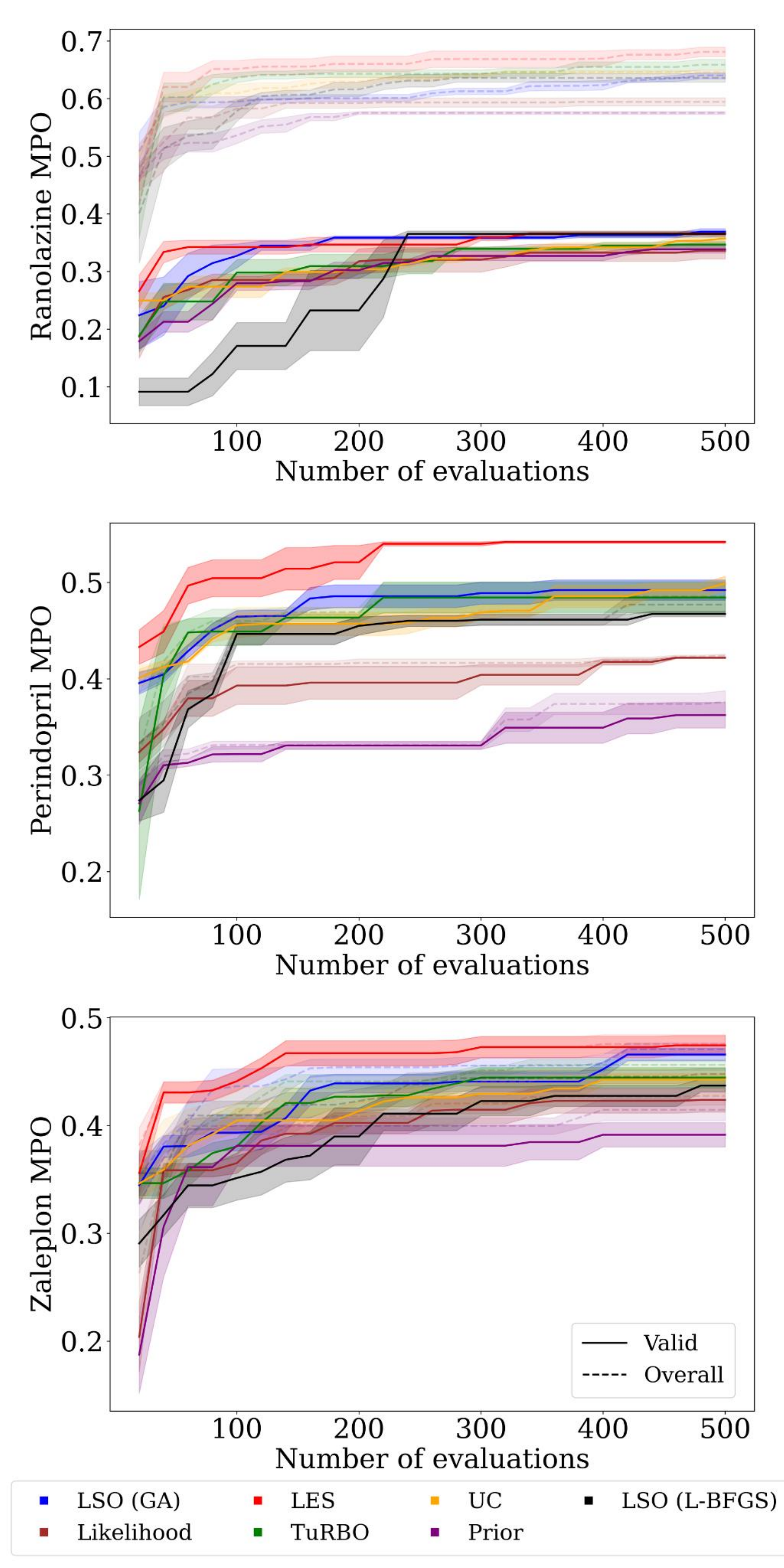}
    \caption{Cumulative true objective for the best solution found during Bayesian optimization with the pre-trained SELFIES-VAE (\cite{maus2022local}). Each method is shown in a distinct color. Solid lines represent solutions passing quality filters, while dashed lines include all evaluations. $\score$ achieves the best performance on $\pdop$ and is comptetitive on $\zale$ and $\rano$.}
    \label{fig:cumobj1}
\end{figure}

\begin{table}[htbp]
\caption{AUROC values (higher is better) for identifying valid data points within the latent space, across datasets and decoder architectures. Data points are sampled from the training data, the VAE prior (standard Gaussian) and out of distribution (Gaussian with std of 5). $\score$ achieves the best performance in most cases (18 out of 22) and on average. In addition, $\score$ achieves AUROC values of at least 0.75 in all cases, indicating it can effectively differentiate valid from invalid data points.\\}\label{tab:roc_scores}
\centering
\renewcommand{\arraystretch}{1.1} 
\small
\begin{tabular}{llclrrrrr}
\toprule
Dataset & Arch. & $\beta$ & $\score$\;  & Polarity & Prior & Uncertainty & Train distances & Likelihood \\
\midrule
\multirow{9}{*}{SMILES} 
& \multirow{3}{*}{GRU} 
    & 0.05 & 0.93 & 0.42 & 0.09 & 0.85 & 0.85 & \textbf{0.94} \\
& 
    & 0.1  & \textbf{0.94} & 0.72 & 0.12 & 0.84 & 0.86 & \textbf{0.94} \\
& 
    & 1.0  & 0.91 & 0.35 & 0.16 & 0.87 & 0.80& \textbf{0.92} \\ \cline{2-9}
& \multirow{3}{*}{LSTM} 
    & 0.05 & \textbf{0.99} & 0.93 & 0.07 & \textbf{0.99} & 0.91 & 0.98 \\
& 
    & 0.1  & 0.89 & 0.67 & 0.21 & 0.89 & 0.81 & \textbf{0.9} \\
& 
    & 1.0  & \textbf{0.97} & 0.76 & 0.12 & 0.95 & 0.85 & 0.96 \\ \cline{2-9}
& \multirow{3}{*}{Transformer} 
    & 0.05 & \textbf{0.93} & 0.89 & 0.14 & 0.84 & 0.77 & 0.92 \\
& 
    & 0.1  & \textbf{0.94} & 0.91 & 0.14 & 0.87 & 0.86 & 0.93 \\
& 
    & 1.0  & \textbf{0.97} & 0.93 & 0.10 & 0.89 & 0.85 & 0.95 \\ \midrule

\multirow{9}{*}{Expressions} 
& \multirow{3}{*}{GRU} 
    & 0.05 & \textbf{0.96} & 0.89 & 0.38 & \textbf{0.96} & 0.67 & 0.88 \\
& 
    & 0.1  & \textbf{0.94} & 0.86 & 0.42 & \textbf{0.94} & 0.71 & 0.8 \\
& 
    & 1.0  & \textbf{0.94} & 0.80 & 0.57 & \textbf{0.94} & 0.75 & 0.89 \\ \cline{2-9}
& \multirow{3}{*}{LSTM} 
    & 0.05 & \textbf{0.96} & 0.86 & 0.38 & 0.90 & 0.67 & \textbf{0.96} \\
& 
    & 0.1  & \textbf{0.95} & 0.83 & 0.37 & 0.91 & 0.66 & \textbf{0.95} \\
& 
    & 1.0  & \textbf{0.95} & 0.79 & 0.56 & 0.91 & 0.72 & 0.91 \\ \cline{2-9}
& \multirow{3}{*}{Transformer} 
    & 0.05 & \textbf{0.91} & 0.79 & 0.43 & 0.86 & 0.71 & 0.90 \\
& 
    & 0.1  & \textbf{0.91} & 0.79 & 0.53 & 0.87 & 0.78 & 0.89 \\
& 
    & 1.0  & 0.86 & 0.61 & 0.70 & \textbf{0.92} & 0.89 & \textbf{0.92} \\ \midrule

\multirow{3}{*}{SELFIES} 
& \multirow{3}{*}{Transformer} 
    & 0.05 & \textbf{1.0} & 0.99 & 0.02 & 0.99 & 0.62 & 0.97 \\
& 
    & 0.1  & \textbf{0.99} & 0.98 & 0.03 & 0.96 & 0.81 & 0.98 \\
& 
    & 1.0  & \textbf{0.95} & 0.94 & 0.06 & 0.85 & 0.72 & 0.93 \\ \midrule

SELFIES~ (\cite{maus2022local})
& Transformer 
    & --   & \textbf{0.75} & 0.39 & 0.69 & 0.33 & 0.69 & 0.70 \\

\midrule
& Average  & & \textbf{0.93} & 0.78 & 0.29 & 0.88 & 0.77 & 0.91 \\
\bottomrule
\end{tabular}

\end{table}

\begin{table}
\centering
\renewcommand{\arraystretch}{1.2} 
\caption{Effect of increasing $\lambda$ parameter for $\score$, for the expressions dataset. Increasing the value of the parameter $\lambda$ increases the percentage of valid solution in all cases.}\label{tab:exp_lambda}
\small
\begin{tabular}{llccc}
\toprule
\textbf{Architecture} & \textbf{$\beta$} & \textbf{\score} ($\lambda=0.05$) & \textbf{\score} ($\lambda=0.5$) \\
\midrule

\multirow{3}{*}{GRU} 
        & \textbf{0.05} 
            & 0.92 
            & 0.96 \\
        & \textbf{0.1} 
            & 0.7 
            & 0.82 \\
        & \textbf{1} 
            & 0.72 
            & 0.87 \\
\midrule

\multirow{3}{*}{LSTM} 
        & \textbf{0.05} 
            & 0.93 
            & 0.98 \\
        & \textbf{0.1} 
            & 0.93 
            & 0.98 \\
        & \textbf{1} 
            & 0.76 
            & 0.97 \\
\midrule

\multirow{3}{*}{Transformer} 
        & \textbf{0.05} 
            & 0.87 
            & 0.94 \\
        & \textbf{0.1} 
            & 0.88 
            & 0.96 \\
        & \textbf{1} 
            & 0.83 
            & 0.85 \\
\bottomrule
\end{tabular}
\end{table}

\begin{table}[htbp]
\centering
\renewcommand{\arraystretch}{1.2} 
\caption{Proportion of valid solutions found during LSO (higher is better) across datasets and decoder architectures.We bold the best method (higher is better) and underline the second best.
$\score$ improves the validity of the solutions compared with $\noreg$ (which is $\score$ with $\lambda=0$) across all datasets.\\} \label{tab:validity}
\small
\begin{tabular}{lccccccccc}
\toprule
 & \textbf{Architecture} & \textbf{$\beta$} &\textbf{\score} & \textbf{\bound} & \textbf{\uc} & \textbf{\noreg} & \textbf{\prior} & \textbf{\turbo} & \textbf{\likelihood} \\
\midrule
\multirow{9}{*}{\rotatebox{90}{\textbf{Expressions}}}
& \multirow{3}{*}{GRU} 
    & 0.05 & 0.92 & 0.59 & \textbf{1.0} & 0.91 & 0.91 & \underline{0.94} & 0.89 \\
& 
    & 0.1 & 0.7 & 0.57 & \textbf{1.0} & 0.64 & 0.66 & \underline{0.9} & 0.67 \\
& 
    & 1 & 0.72 & 0.45 & \textbf{0.99} & 0.69 & 0.69 & \underline{0.83} & 0.69 \\
\cmidrule{2-10}
& \multirow{3}{*}{LSTM} 
    & 0.05 & 0.93 & 0.62 & \textbf{1.0} & 0.88 & 0.89 & \underline{0.94} & 0.92 \\
& 
    & 0.1 & 0.93 & 0.67 & \textbf{1.0} & 0.9 & 0.89 & \underline{0.94} & 0.92 \\
& 
    & 1 & 0.76 & 0.58 & \textbf{0.99} & 0.65 & 0.66 & \underline{0.89} & 0.73 \\
\cmidrule{2-10}
& \multirow{3}{*}{Transformer} 
    & 0.05 & 0.87 & 0.37 & \textbf{1.0} & 0.83 & 0.85 & \underline{0.9} & 0.84 \\
& 
    & 0.1 & 0.88 & 0.28 & \textbf{1.0} & 0.8 & 0.81 & \underline{0.89} & 0.85 \\
& 
    & 1 & 0.83 & 0.36 & \textbf{1.0} & 0.74 & 0.76 & \underline{0.87} & 0.79 \\
\midrule
\multirow{9}{*}{\rotatebox{90}{\textbf{SELFIES}}}
& \multirow{3}{*}{Transformer (pdop)} 
    & 0.05 & \underline{0.8} & 0.05 & \textbf{0.81} & 0.76 & 0.73 & 0.14 & 0.77 \\
& 
    & 0.1 & \underline{0.68} & 0.04 & \textbf{0.71} & 0.62 & 0.56 & 0.14 & 0.64 \\
& 
    & 1 & \textbf{0.59} & 0.08 & \underline{0.55} & 0.49 & 0.44 & 0.08 & 0.53 \\
\cmidrule{2-10}
& \multirow{3}{*}{Transformer (rano)} 
    & 0.05 & \underline{0.66} & 0.02 & \textbf{0.73} & 0.61 & 0.57 & 0.09 & 0.62 \\
& 
    & 0.1 & \underline{0.57} & 0.01 & \textbf{0.67} & 0.52 & 0.44 & 0.04 & 0.53 \\
& 
    & 1 & \underline{0.43} & 0.02 & \textbf{0.49} & 0.36 & 0.28 & 0.01 & 0.39 \\
\cmidrule{2-10}
& \multirow{3}{*}{Transformer (zale)} 
    & 0.05 & \underline{0.71} & 0.08 & \textbf{0.74} & 0.66 & 0.62 & 0.12 & 0.69 \\
& 
    & 0.1 & \underline{0.66} & 0.01 & \textbf{0.67} & 0.59 & 0.51 & 0.08 & 0.63 \\
& 
    & 1 & \textbf{0.54} & 0.18 & \underline{0.47} & 0.43 & 0.35 & 0.17 & 0.46 \\
\midrule
\multirow{3}{*}{\rotatebox{90}{\textbf{SELFIES (\cite{maus2022local})}}}
& \multirow{1}{*}{Transformer (pdop)} 
    & 1 & \underline{0.69} & 0.45 & 0.56 & 0.58 & \textbf{0.75} & 0.43 & 0.68 \\
\cmidrule{2-10}
& \multirow{1}{*}{Transformer (rano)} 
    & 1 & \textbf{0.26} & 0.07 & 0.14 & 0.19 & \underline{0.20} & 0.11 & 0.15 \\
\cmidrule{2-10}
& \multirow{1}{*}{Transformer (zale)} 
    & 1 & 0.65 & 0.52 &  0.66 & 0.66 & \underline{0.75} & 0.48 & \textbf{0.76} \\
\midrule
\multirow{9}{*}{\rotatebox{90}{\textbf{SMILES}}}
& \multirow{3}{*}{GRU} 
    & 0.05 & \textbf{0.61} & 0.14 & 0.48 & 0.47 & 0.44 & 0.12 & \underline{0.59} \\
& 
    & 0.1 & \textbf{0.37} & 0.07 & 0.25 & 0.23 & 0.22 & 0.06 & \underline{0.35} \\
& 
    & 1 & \textbf{0.09} & 0.02 & 0.05 & 0.05 & 0.04 & 0.02 & \underline{0.08} \\
\cmidrule{2-10}
& \multirow{3}{*}{LSTM} 
    & 0.05 & \textbf{0.6} & 0.11 & 0.44 & 0.42 & 0.39 & 0.11 & \underline{0.57} \\
& 
    & 0.1 & \textbf{0.08} & 0.01 & 0.06 & 0.05 & 0.04 & 0.01 & \underline{0.07} \\
& 
    & 1 & \textbf{0.16} & 0.09 & 0.07 & 0.08 & 0.07 & 0.09 & \underline{0.12} \\
\cmidrule{2-10}
& \multirow{3}{*}{Transformer} 
    & 0.05 & \underline{0.7} & 0.42 & \underline{0.7} & 0.68 & 0.68 & 0.31 & \textbf{0.72} \\
& 
    & 0.1 & \underline{0.65} & \textbf{0.67} & 0.63 & 0.6 & 0.55 & 0.41 & 0.64 \\
& 
    & 1 & \textbf{0.48} & 0.35 & \underline{0.46} & 0.34 & 0.26 & 0.3 & 0.45 \\
\bottomrule
& \textbf{Average}  & & \underline{0.62} & 0.26 & \textbf{0.64} & 0.55 & 0.53 & 0.38 & 0.59 \\

\bottomrule
\end{tabular}
\end{table}

\end{document}